\def\path{\ensuremath{\mathcal{A}}}
\def\pathlength{\ensuremath{P}}
\def\reqsetcost(#1){\ensuremath{\mathcal{C}(#1)}}
\def\downrs(#1){\ensuremath{\mathcal{O}_A(#1)}}
\def\uprs(#1){\ensuremath{\mathcal{O}_B(#1)}}
\def\lowerbound(#1){\ensuremath{\mathrm{LB}(#1)}}
\def\lb(#1,#2){\ensuremath{\textrm{LB}(#1,#2)}}
\def\dtw(#1,#2){\ensuremath{\textrm{DTW}(#1,#2)}}
\def\cdtw(#1,#2){\ensuremath{\textrm{DTW}_W(#1,#2)}}
\def\leftrs(#1){\ensuremath{\mathcal{L}^W_{#1}}}
\def\rightrs(#1){\ensuremath{\mathcal{R}^W_{#1}}}
\def\leftset{\ensuremath{\mathcal{L}}}
\def\rightset{\ensuremath{\mathcal{R}}}
\def\Vset{\ensuremath{\mathcal{V}}}
\newcommand*{\Tpair}{$A$ and $B$}
\newcommand{\lbkim}{\textsc{LB\_Kim}}
\newcommand{\lbyi}{\textsc{LB\_Yi}}
\newcommand{\lbkeogh}{\textsc{LB\_Keogh}}
\newcommand{\lbimproved}{\textsc{LB\_Improved}}
\newcommand{\lbnew}{\textsc{LB\_New}}
\newcommand{\lbtan}{\textsc{LB\_Enhanced}}
\newcommand{\lbtanone}{\textsc{LB\_Enhanced$^1$}}
\newcommand{\lbtantwo}{\textsc{LB\_Enhanced$^2$}}
\newcommand{\lbtanfive}{\textsc{LB\_Enhanced$^5$}}
\DeclareMathOperator{\LBKim}{\textsc{LB\_Kim}}
\DeclareMathOperator{\LBYi}{\textsc{LB\_Yi}}
\DeclareMathOperator{\LBKeogh}{\textsc{LB\_Keogh}}
\DeclareMathOperator{\LBImproved}{\textsc{LB\_Improved}}
\DeclareMathOperator{\LBNew}{\textsc{LB\_New}}
\DeclareMathOperator{\LBTan}{\textsc{LB\_Enhanced}}
\DeclareMathOperator{\DTW}{\textsc{DTW}}
\DeclareMathOperator{\NNDTW}{\textsc{NN-DTW}}
\def\W{4}
\newcommand{\ourRepo}{\url{http://bit.ly/SDM19}}
\newcommand{\githubRepo}{\url{https://github.com/ChangWeiTan/LbEnhanced}}
\title{Elastic bands across the path: \\ A new framework and method to lower bound DTW}
\author{Chang Wei Tan \\chang.tan@monash.edu \and 
Fran\c{c}ois Petitjean \\francois.petitjean@monash.edu \and  
Geoffrey I.\ Webb \\geoff.webb@monash.edu}
\date{\vspace{-3mm}\small Faculty of IT, Monash University, Melbourne, Australia}
\begin{document}
\maketitle
\begin{abstract}\small\baselineskip=9pt 
The Nearest Neighbour algorithm coupled with the Dynamic Time Warping similarity measure (NN-DTW) is at the core of state-of-the-art classification algorithms including Ensemble of Elastic Distances and Collection of Transformation-Based Ensemble. 
DTW's complexity makes NN-DTW  highly computationally demanding. 
To combat this, lower bounds to DTW are used to minimize the number of times the expensive DTW need be computed during NN-DTW search. 
Effective lower bounds must balance `time to calculate' vs `tightness to DTW.'
On the one hand, the tighter the bound the fewer the calls to the full DTW. 
On the other, calculating tighter bounds usually requires greater computation.
Numerous lower bounds have been proposed. 
Different bounds provide different trade-offs between computational time and tightness. 
In this work, we present a new class of lower bounds that are tighter than the popular Keogh lower bound,
while requiring similar computation time.
Our new lower bounds take advantage of the DTW boundary condition, 
monotonicity and continuity constraints. 
In contrast to most existing bounds, 
they remain relatively tight even for large windows.
A single parameter to these new lower bounds controls the speed-tightness trade-off. 
We demonstrate that these new lower bounds provide an exceptional balance between computation time and tightness for the NN-DTW time series classification task, resulting in greatly improved efficiency for NN-DTW lower bound search.

\end{abstract}

\section{Introduction}
Dynamic Time Warping ($\DTW$) lower bounds play a key role in speeding up many forms of time series
analytics \cite{tan2017indexing,rakthanmanon2012searching,petitjean2014dynamic,keogh2005exact}. 
Several lower bounds have been proposed \cite{shen2018accelerating,keogh2005exact,kim2001index,lemire2009faster,yi1998efficient}.
Each provides a different trade-off between compute time (speed) and tightness. Figure~\ref{fig:motivation} \begin{figure}[t!]
\centering
\includegraphics[width=.95\columnwidth,trim=10pt 0pt 20pt 0pt]{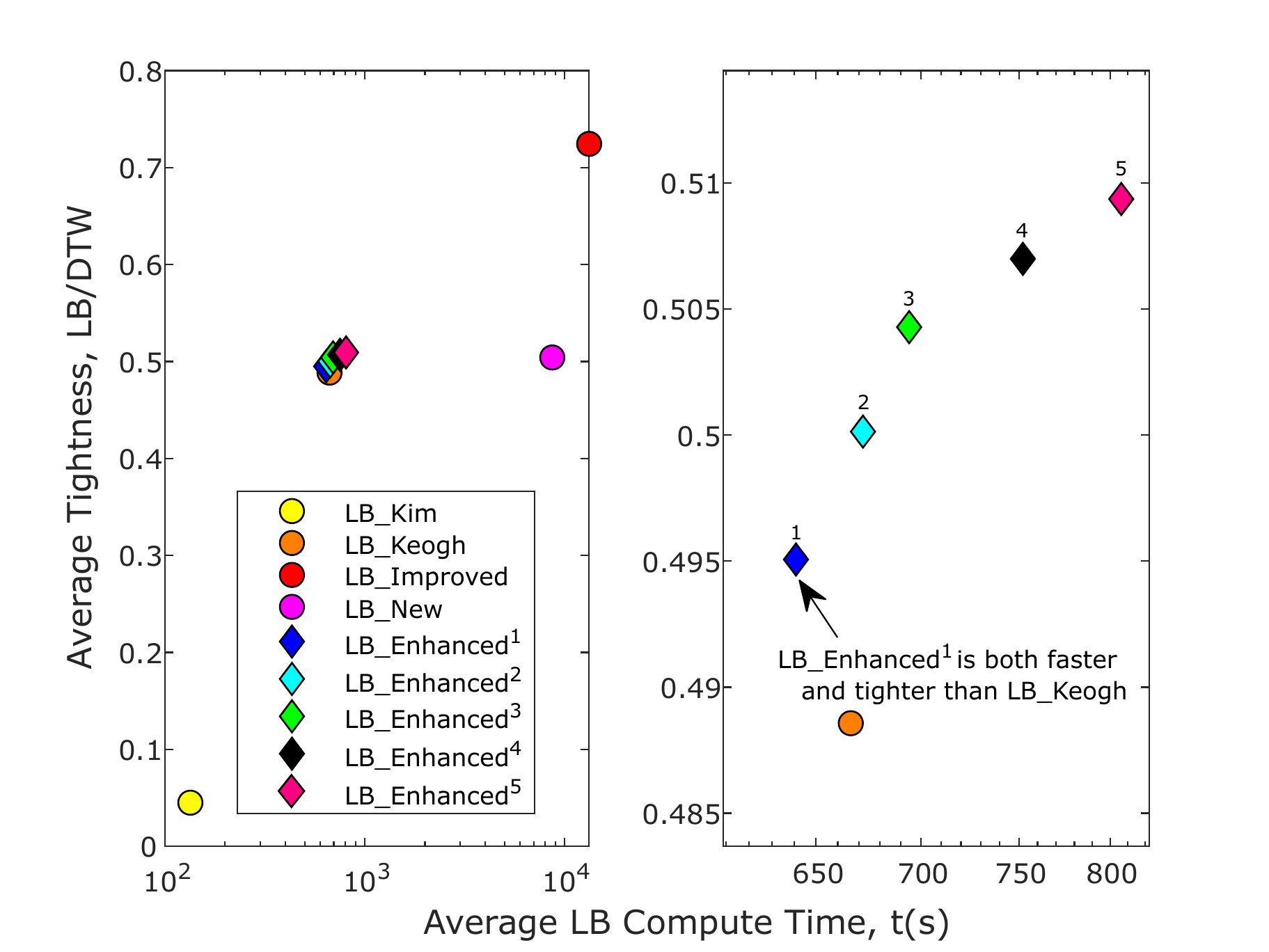}
\vspace*{-6pt}
\caption{Tightness-Compute Time comparison of existing and our lower bounds at $W{=}0.1\cdot L$ over 250,000 time series pairs with $L{=}256$ randomly sampled from the benchmark UCR time series archive \cite{ucrarchive}. Figure on the right shows the zoomed in plot.
Our $\lbtan^1$ is faster and tighter than \lbkeogh{}.
Our other variants are even tighter at a cost of being slightly slower.}
\label{fig:motivation}
\end{figure}%
illustrates this, 
plotting average tightness $(\lb(A,B)/\DTW(A,B))$ against the average time to compute for alternative lower bounds.
As shown in Figure~\ref{fig:tightness},
\begin{figure}[t!]
\centering
\includegraphics[width=0.9\columnwidth]{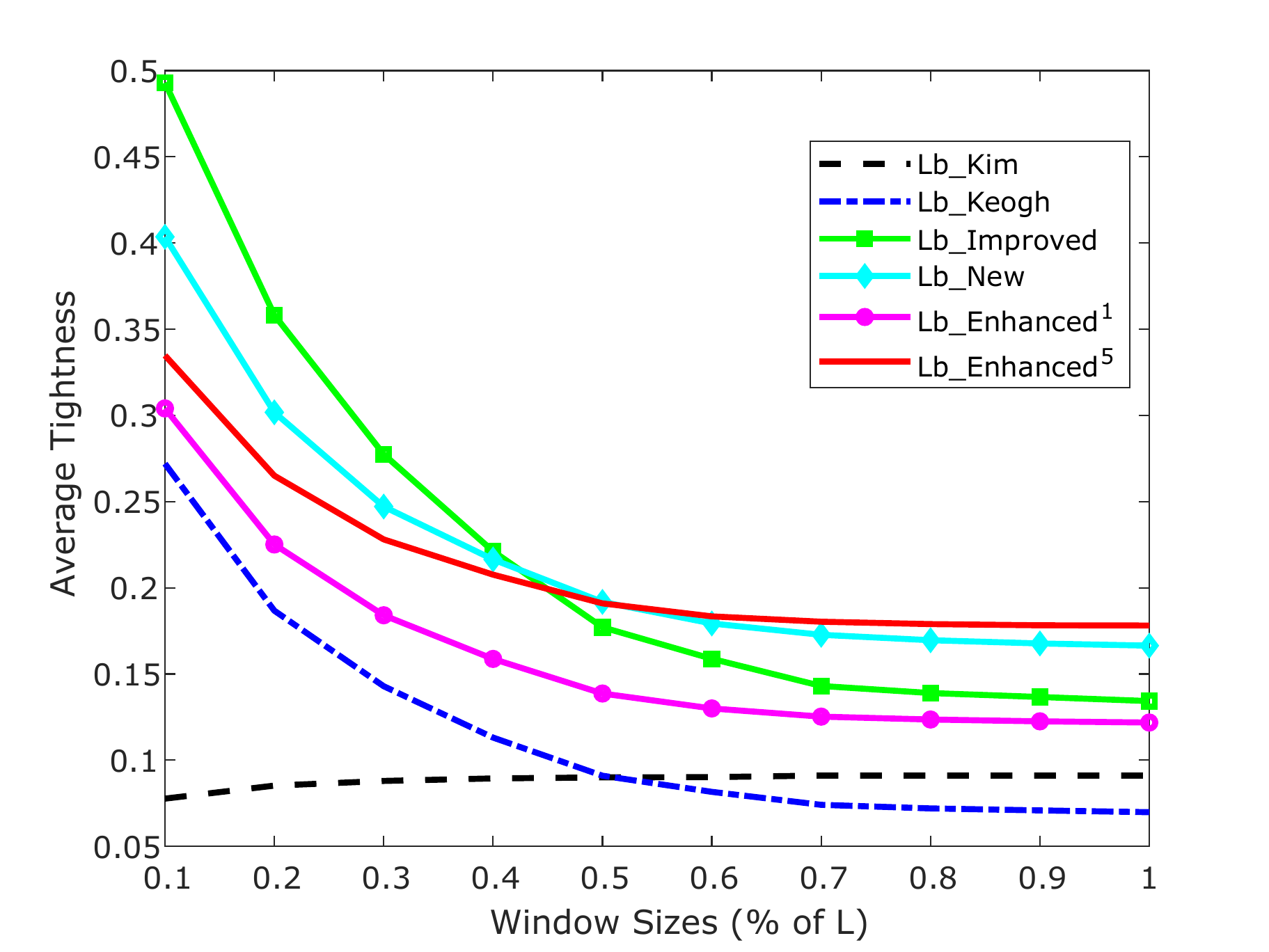}
\vspace*{-6pt}
\caption{Tightness of different lower bounds at differing window sizes averaged across all UCR datasets \cite{ucrarchive}. Our very efficient  $\lbtan^5$ is tighter than any alternative at large window sizes.}\label{fig:tightness}
\end{figure}
different bounds have different relative tightness at different window sizes (the tighter the better).

In this paper, we present a family of lower bounds, 
all of which are of $O(L)$ time complexity and 
are in practice tighter than \lbkeogh{} \cite{keogh2005exact}.  
Our new lower bound is parameterized,
with a tightness parameter, $V$ controlling a useful speed-tightness trade-off.
Two of these, \lbtanone{} and \lbtantwo{}, 
where $V{=}\{1,2\}$ respectively,
have very similar compute time to \lbkeogh{}, 
while providing tighter bounds, 
meaning that their performance should dominate that of \lbkeogh{} on any standard time series analysis task.

We focus on the application of lower bounds to $\DTW$ in Nearest Neighbor ($\NNDTW$) Time Series Classification (TSC). 
$\NNDTW$ is in its own right a useful TSC algorithm and 
is a core component of the most accurate current TSC algorithms,
\textsc{COTE} \cite{bagnall2015time} and 
\textsc{EE} \cite{lines2015time}, 
which are ensembles of TSC algorithms.
$\NNDTW$ is thus at the core of TSC algorithms, 
but is extremely costly to compute \cite{tan2017indexing,tan2018efficient}.
Given a training set with $N$ time series and length $L$, 
a single classification with standard $\NNDTW$ requires $O(N\cdot L^2)$ operations.
Besides, $\NNDTW$ is only competitive when used with a warping window, $W$ learned from the training set \cite{bagnall2017great}. 
Learning the best warping window is very time consuming as it requires the enumeration of 
numerous
windows in the range of 0\% to 100\% of $L$ and is extremely inefficient for large training sets \cite{tan2018efficient}.

There has been much research into speeding up $\NNDTW$, 
tackling either the $N$ part \cite{tan2017indexing} or the 
$L^2$ part of the complexity \cite{keogh2005exact,kim2001index,lemire2009faster,yi1998efficient,xi2006fast,shen2018accelerating}.
A key strategy is to use lower bound search, 
which employs lower bounds on $\DTW$ to efficiently exclude nearest neighbor candidates without having to calculate $\DTW$ \cite{shen2018accelerating,keogh2005exact,kim2001index,lemire2009faster,yi1998efficient}. 
We show that different speed-tightness trade-offs from different lower bounds prove most effective at speeding up $\NNDTW$ for different window sizes. 

Of the existing widely used lower bounds, 
\lbkim{} \cite{kim2001index} is the fastest,
with constant time complexity with respect to window size. 
It is the loosest of the existing standard bounds for very small $W$,
but its relative tightness increases as window size increases.
For small window sizes, 
\lbkeogh{} \cite{keogh2005exact} provides an effective trade off between speed and tightness. 
However, as shown in Figure~\ref{fig:tightness}, it is sometimes even looser than \lbkim{} at large window sizes. 
The more computationally intensive, \lbimproved{} \cite{lemire2009faster} provides a more productive trade-off for many of the larger window sizes.
\lbnew{} \cite{shen2018accelerating} does not provide a winning trade-off for this task at any window size.

The new $\DTW$ lower bound that we propose has the same complexity $O(L)$ as \lbkeogh{}.
At its lowest setting, \lbtanone{}, 
it is uniformly tighter than \lbkeogh{}. 
It replaces two calculations of the distance of a query point (the first and last) to a target \lbkeogh{} envelope with two calculations of distances between a query and a target point. This may or may not be faster, depending whether the query point falls within the envelope, in which case \lbkeogh{} does not perform a distance calculation. However, due to its greater tightness, this variant always supports faster $\NNDTW$.  
At $V{=}\{2,3,4,5\}$, 
our tighter \lbtan{} provide the greatest speed-up out of all standard $\DTW$ lower bounds for $\NNDTW$ over a wide range of window sizes.

Our paper is organised as follows.
In Section~\ref{sec:background},
we review relevant background and related work. 
Then we describe our proposed lower bound in Section~\ref{sec:ourlb}.
Section~\ref{sec:experiments} presents an evaluation of our new lower bound in terms of its utility in TSC with $\NNDTW$. 
Lastly, we conclude our paper in Section~\ref{sec:conclusion}.

\section{Background and Related Work} \label{sec:background}
We let $A=\langle A_1, \ldots, A_L\rangle$ and $B=\langle B_1, \ldots, B_L\rangle$ be a pair of time series \Tpair{} that we want to compare.
Note that, for ease of exposition, 
we assume that the two series are of equal length, 
but the techniques trivially generalize to unequal length series.

\subsection{Dynamic Time Warping} \label{sec:dtw}
The Dynamic Time Warping ($\DTW$) similarity measure was first introduced in \cite{sakoe1972dynamic} for aligning and comparing time series with application to speech recognition.
$\DTW$ finds the global \emph{alignment} of a time series pair, 
\Tpair{}, as illustrated in Figure~\ref{fig:dtw}.
The \emph{warping path} of $A$ and $B$ is a sequence $\path=\langle\path_1,\ldots,\path_\pathlength\rangle$ of \emph{links}. 
Each link is a pair $\path_k=( i,j)$ indicating that $A_i$ is aligned with $B_j$. 
$\path$ must obey the following constraints: 
\begin{itemize}
  \item {\bf Boundary Conditions}: $\path_1=( 1,1)$ and $\path_\pathlength=(L,L)$.
  \item {\bf Continuity} and {\bf Monotonicity}: for any $\path_k=( i,j)$, $1<k\leq\pathlength$, $\path_{k+1}\in\{( i{+}1,j), ( i,j{+}1), ( i{+}1,j{+}1)\}$.
\end{itemize}

\begin{figure}[t]
\centering
\begin{subfigure}[b]{0.49\columnwidth}
\includegraphics[width=\columnwidth]{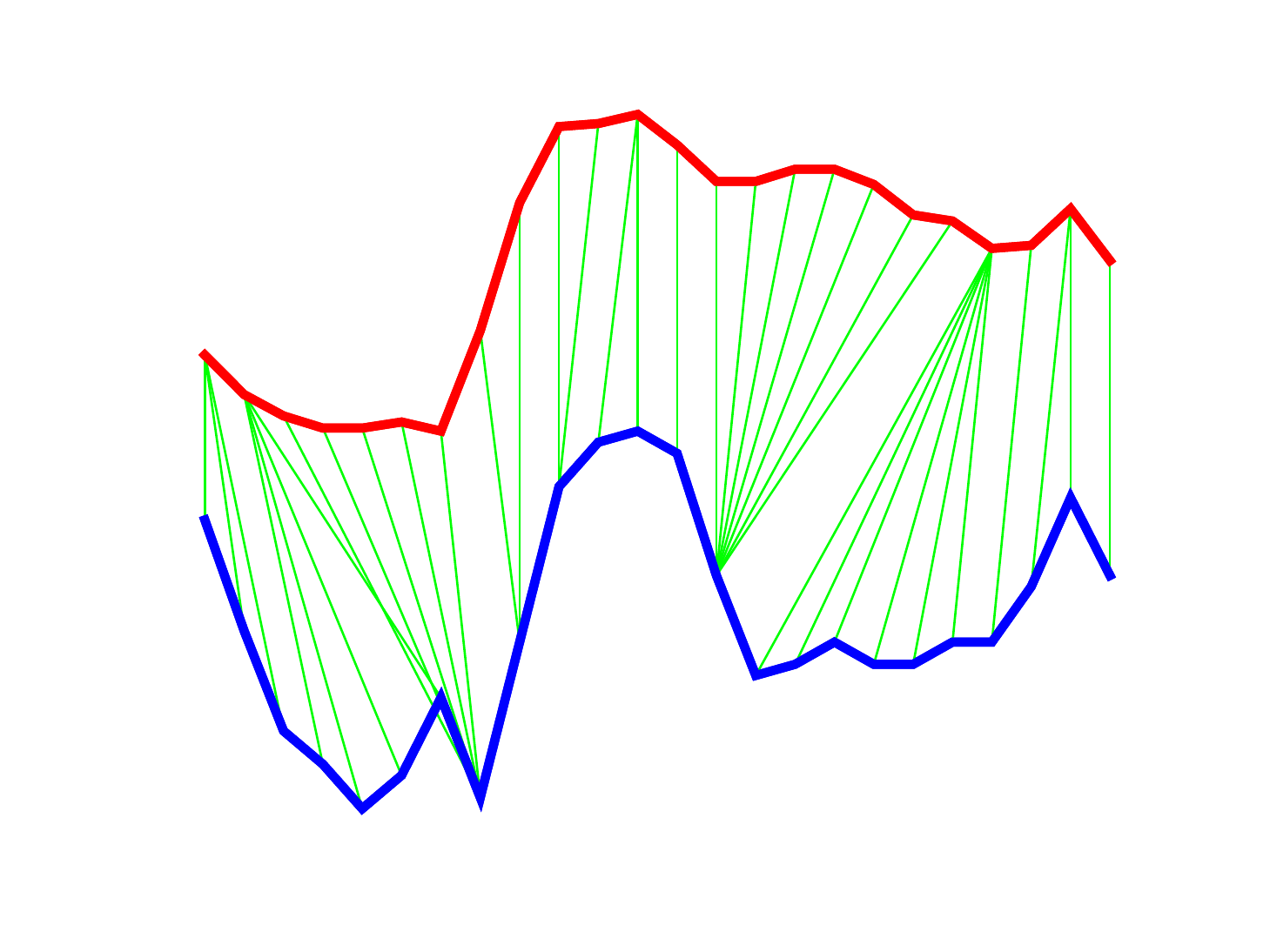}
\caption{}
\label{fig:dtw}
\end{subfigure}
\hfill
\begin{subfigure}[b]{0.49\columnwidth}
\includegraphics[width=\columnwidth]{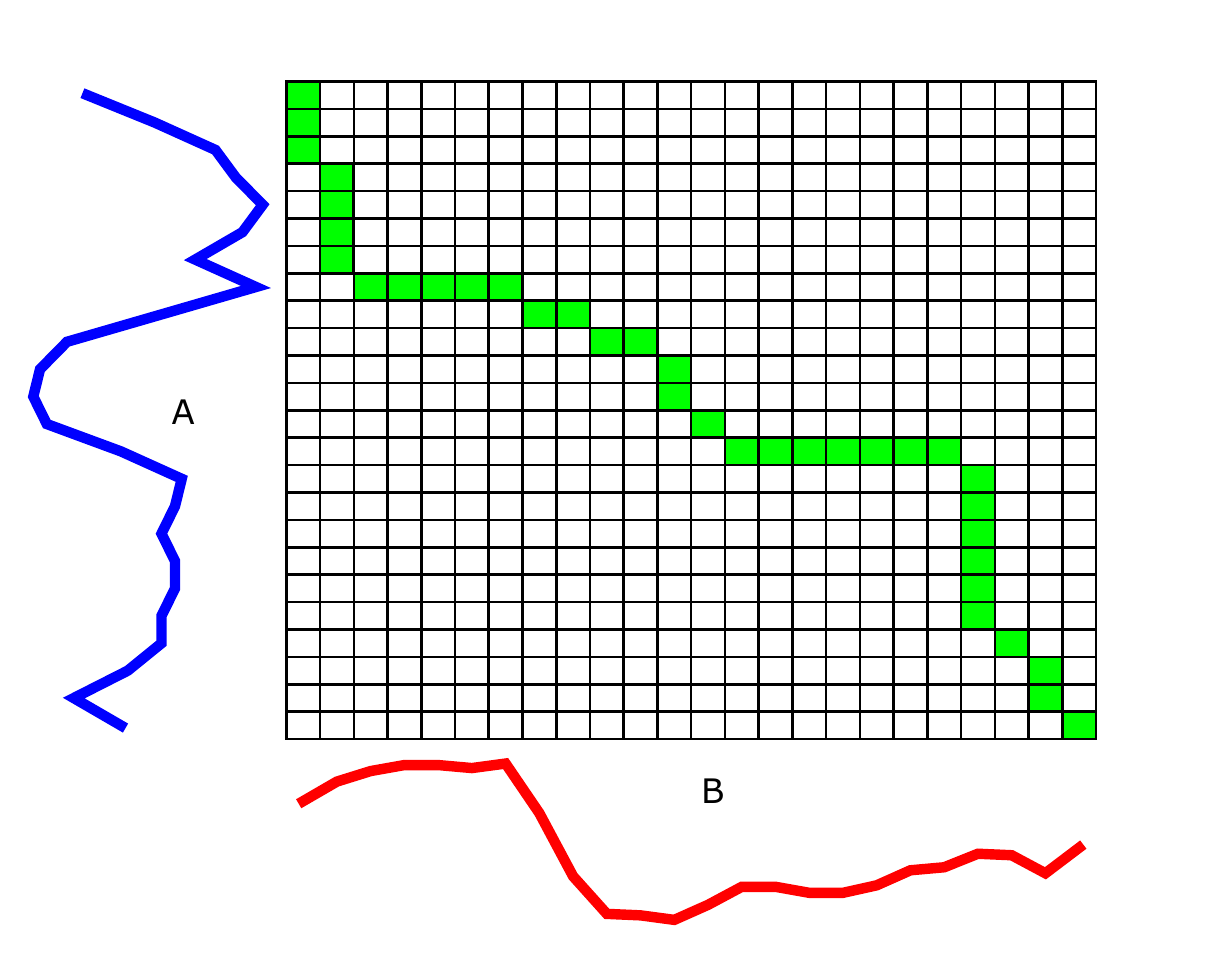}
\caption{}
\label{fig:costmatrix}
\end{subfigure}

\caption{(a) $\DTW$ warping path for time series \Tpair{} (b) Cost matrix $D$ with the warping path $\path$ (green)}
\end{figure}

The cost of a warping path is minimised using dynamic programming by building a cost matrix $D$, as illustrated in Figure~\ref{fig:costmatrix}. 
Each cell $(i,j)$ of the matrix shows the cost of aligning $A_i$ and $B_j$. 
The cost of the warping path for a pair of time series \Tpair{} is computed recursively using Equation~\ref{eqn:cost}, 
where $\delta(A_i,B_j)$ is the $L_2$-norm of $A_i$ and $B_j$. 


\begin{equation} \label{eqn:cost}
    D(i,j) = \delta(A_i, B_j) + \min
    \begin{cases}
        D(i-1,j-1) \\
        D(i,j-1)   \\
        D(i-1,j) 
    \end{cases} 
\end{equation}

\noindent
Then $\dtw(A,B)$ is calculated using Equation~\ref{eqn:dtw}, 
where $\path_i^1$ is the first index in $\path_i$ and $\path_i^2$ the second.

\vspace{-10pt}
\begin{equation} \label{eqn:dtw}
    \DTW(A,B)=\sqrt{D(L,L)}=\sqrt{\sum_{i=1}^L\delta(A_{\path_i^1},B_{\path_i^2})}
\end{equation}

A global constraint on the warping path can be applied to $\DTW$,
such that $A_i$ and $B_j$ can only be aligned if they are within a window range, $W$. 
This limits the distance in the time dimension that can separate $A_i$ from points in $B$ with which it can be aligned \cite{sakoe1972dynamic,keogh2005exact}.  
This constraint is known as the warping window,
$W$ (previously Sakoe-Chiba band) \cite{sakoe1972dynamic}
and we write this as $\DTW_W(A,B)$.
Note that we have $1\leq W\leq L$; 
$\DTW_0$ corresponds to the Euclidean distance; and $\DTW_L$ is equivalent to unconstrained DTW. 
Figure~\ref{fig:ww} shows an example with warping window $W{=}3$,
where the alignment of \Tpair{} is constrained to be inside the gray band. 

\begin{figure}[!t]
\centering
	\includegraphics[width=0.6\columnwidth]{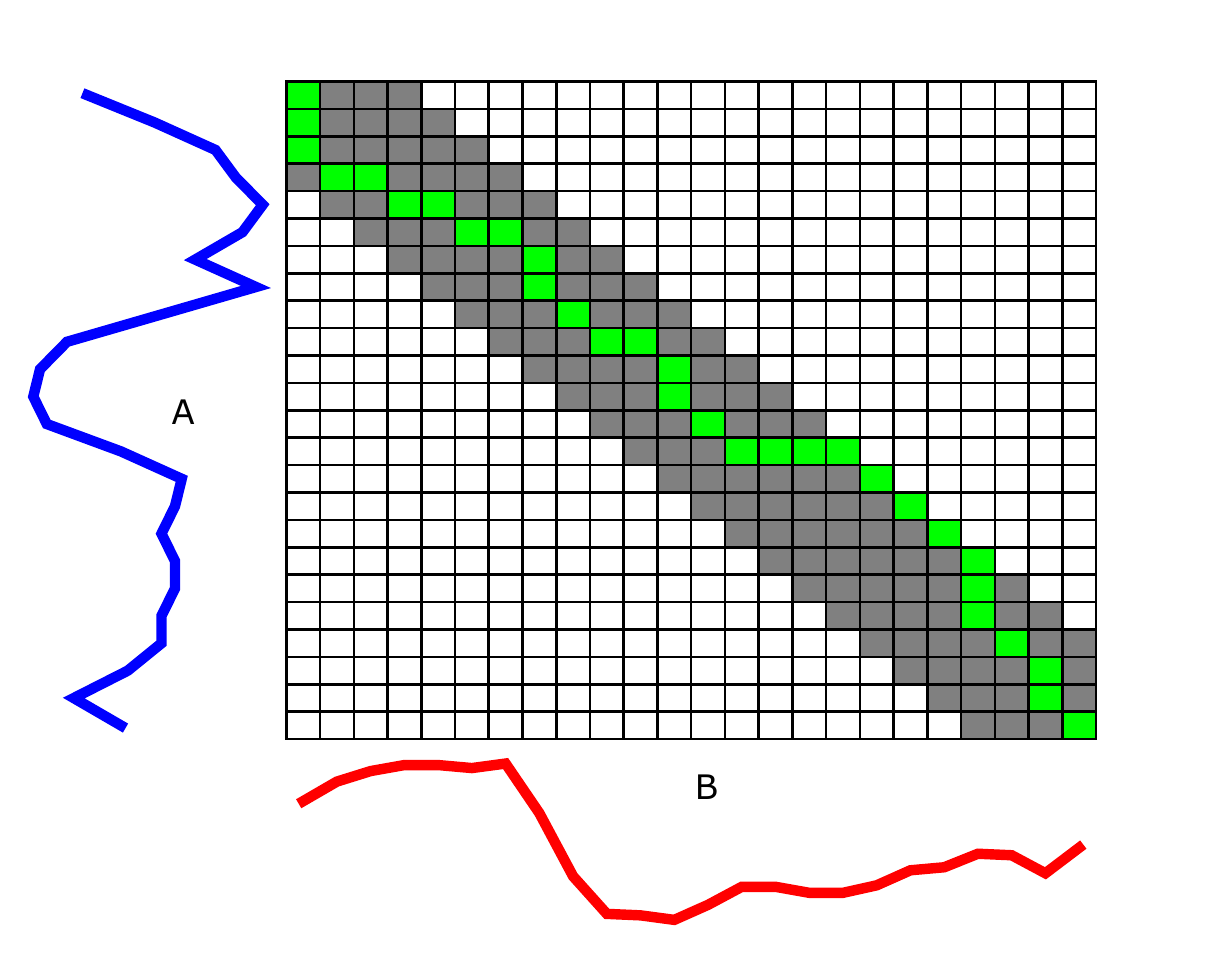}
    \vspace*{-10pt}
    \caption{$\DTW$ with warping window, $W=3$}
    \label{fig:ww}
\end{figure}

Constraining the warping path has two main benefits: 
(1) increasing   $\NNDTW$ accuracy by preventing pathological warping of a pair of time series \Tpair{}.
(2) speeding up $\NNDTW$ by reducing the complexity of $\DTW$ from $O(L^2)$ to $O(W\cdot L)$ \cite{tan2018efficient}.

Alternative constraints  have also been developed,
such as the Itakura Parallelogram \cite{itakura1975minimum}
and the Ratanamahatana-Keogh band \cite{ratanamahatana2004making}. 
In this paper, we focus on the Sakoe-Chiba Band which, 
arguably, is by far the most used constraint in the literature \cite{ratanamahatana2005three,xi2006fast,tan2018efficient}. 

\subsection{Existing DTW Lower Bounds} \label{sec:lb}
$\NNDTW$ with lower bound minimises the number of $\DTW$ computations. 
In this section, we review the existing lower bounds for $\DTW$.
For the rest of the paper, 
we will refer a lower bound as \textsc{LB\_$\langle$name$\rangle$} and
consider $A$ as the query time series that is compared to $B$.

\subsubsection{Kim Lower Bound} \label{sec:lbkim}
(\lbkim) \cite{kim2001index} is the simplest lower bound for $\DTW$ with constant $O(1)$ complexity.
\lbkim{} extracts four features -- distances of the first, last, minimum and maximum points from the time series. 
Then the maximum of all four features is the lower bound for $\DTW$. 

\begin{equation*}
    \LBKim(A,B) = \max
    \begin{cases}
        \delta(A_1,B_1) \\
        \delta(A_L,B_L)   \\
        \delta(A_{\min},B_{\min}) \\ 
        \delta(A_{\max},B_{\max})
    \end{cases} 
    \label{eqn:lbkim}
\end{equation*}

\subsubsection{Yi Lower Bound} \label{sec:lbyi}

(\lbyi) \cite{yi1998efficient} takes advantage that  all the points in $A$ that are larger than $B_{\max}$ or smaller than $B_{\min}$ shown in Equation~\ref{eqn:lbyi},
must at least contribute to the final $\DTW$ distance. 
Thus the sum of their distances to $B_{\max}$ or $B_{\min}$ forms a lower bound for $\DTW$.

\begin{equation*}
    \LBYi(A,B) = \sum_{i=1}^{L}
    \begin{cases}
        \delta(A_i,B_{\max}) & \text{if } A_i > B_{\max} \\
        \delta(A_i,B_{\min}) & \text{if } A_i < B_{\min}  \\
        0 & \text{otherwise}
    \end{cases} 
    \label{eqn:lbyi}
\end{equation*}

\subsubsection{Keogh Lower Bound} \label{sec:lbkeogh}

(\lbkeogh) \cite{keogh2005exact} first creates two new time series, 
upper $\mathbb{U}^B$ (Equation~\ref{eqn:upperenvelope}) and lower $\mathbb{L}^B$ (Equation~\ref{eqn:lowerenvelope}) envelopes. 
These are the upper and lower bounds on $B$ within the window of each point in $A$.
Then the lower bound is the sum of distances to the envelope of points in $A$ that are outside the envelope of $B$.

\begin{gather}
    \mathbb{U}_i^B =\max_{\max(1,i{-}W)\leq j \leq \min(L, i{+}W)}(B_j) \label{eqn:upperenvelope} \\
    \mathbb{L}_i^B =\min_{\max(1,i{-}W)\leq j \leq \min(L, i{+}W)}(B_j) \label{eqn:lowerenvelope}
\end{gather} 
\vspace{10pt}
\begin{equation*}
\LBKeogh_W(A,B) = \sum_{i=1}^{m} 
    \begin{cases}
        \delta(A_i,\mathbb{U}_i^B) & \text{if } A_i > \mathbb{U}_i^B \\
        \delta(A_i,\mathbb{L}_i^B) & \text{if } A_i < \mathbb{L}_i^B \\
        0 & \text{otherwise}
    \end{cases} 
\end{equation*}


\subsubsection{Improved Lower Bound} \label{sec:lbimproved} 

(\lbimproved) \cite{lemire2009faster} first computes \lbkeogh{} and finds $A'$, the projection of $A$ on to the envelope of $B$ using Equation~\ref{eqn:projection}, 
where $\mathbb{U}_i^B$ and $\mathbb{L}_i^B$ are the envelope of $B$. 
Then it builds the envelope for $A'$ and computes \lbkeogh$(B, A')$.
Finally, \lbimproved{} is the sum of the two \lbkeogh{}s. 

\vspace{-5pt}
\begin{equation}
    A'_i = 
    \begin{cases}
        \mathbb{U}^B_i & \text{if } A_i > \mathbb{U}^B_i \\
        \mathbb{L}^B_i & \text{if } A_i < \mathbb{L}^B_i \\
        A_i & \text{otherwise}
    \end{cases} \label{eqn:projection}
    \end{equation}
\begin{equation*} 
\begin{split}
        \LBImproved_W(A,B) = \LBKeogh_W(A,B) +\\
         \LBKeogh_W(B,A') 
    \end{split}
\end{equation*}

\lbimproved{} is tighter than \lbkeogh{}, 
but has higher computation overheads, 
requiring multiple passes over the series.
However, to be effective, 
it is usually used with an early abandon process,
whereby the bound determined in the first pass is considered, 
and if it is sufficient to abandon the search, 
the expensive second pass is not performed. 

\subsubsection{New Lower Bound} \label{sec:lbnew} 

(\lbnew{}) \cite{shen2018accelerating} takes advantage of the boundary and continuity conditions for a $\DTW$ warping path to create a tighter lower bound than \lbkeogh. 
The boundary condition requires that every warping path contains $(A_1,B_1)$ and $(A_L,B_L)$.
The continuity condition ensures that every $A_i$ is paired with at least one $B_j$ in $\mathcal{B}_i$,
where $j\in\{\max(1,i-W)\ldots \min(L,i+W)\}$.

\vspace{-10pt}
\begin{align*}
    \LBNew_W(A,B) = \,& \delta(A_1,B_1) + \delta(A_L,B_L) + \\ 
    & \sum_{i=2}^{L-1}\min_{b \in \mathcal{B}_i} \delta(A_i, b) \nonumber \label{eqn:lbnew}
\end{align*}

\subsubsection{Cascading Lower Bounds}
Instead of using standalone lower bounds,
multiple lower bounds with increasing complexity can be cascaded,
to form an overall tighter lower bound \cite{rakthanmanon2012searching}. 
This greatly increases the pruning power and reduces the overall classification time.
The \textsc{UCR Suite} \cite{rakthanmanon2012searching} cascades \lbkim{}, \lbkeogh{}$(A,B)$ and \lbkeogh{}$(B,A)$ to achieve a high speed up in time series search.

\section{Proposed DTW Lower Bound} \label{sec:ourlb}
Our proposed lower bounds are based on the observation that the warping paths are very constrained at the start and the end of the series. 
Specifically, the boundary constraints require that the first link, $\path_1$, is $(1,1)$, the top left of a cost matrix.
The continuity and monotonicity constraints ensure that  $\path_2\in\{(1,2),(2,1),(2,2)\}$.   
If we continue this sequence of sets we get the \emph{left bands},
\begin{align}
\leftset^W_i{=}&\{(\max(1,i{-}W),i),(\max(1,i{-}W)+1,i),\ldots,(i,i),\nonumber\\
&\hspace*{5pt}(i,i-1),\ldots,(i,\max(1,i{-}W)\} \nonumber.
\end{align}
These are the alternating bands through the cost matrix shown in Figure \ref{fig:leftmatrix}. 
We can use these bands to define a lower bound on $\DTW$ as explained in Theorem~\ref{thm:left}.

\begin{theorem}\label{thm:left}
$\sum_{i=1}^L \min_{(j,k)\in \leftset^W_i}\delta(A_j,B_k)$ is a lower bound on $\DTW_W(A,B)$.
\end{theorem}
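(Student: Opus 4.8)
The plan is to reduce the stated inequality to a purely combinatorial fact about where a windowed warping path can cross each band, and then to control double-counting by showing the bands are pairwise disjoint. The first step I would take is to re-express each band in a form that exposes its geometry: I claim
\[
\leftset^W_i = \{(j,k) : \max(j,k) = i \text{ and } \min(j,k) \ge \max(1, i-W)\}.
\]
Indeed, the vertical arm $\{(j,i)\}$ has second coordinate $i \ge j$ and the horizontal arm $\{(i,k)\}$ has first coordinate $i > k$, so in both cases $\max(j,k) = i$; the lower endpoint $\max(1,i-W)$ of each arm is exactly the constraint $\min(j,k) \ge \max(1,i-W)$. An immediate and very useful consequence is that the bands $\leftset^W_1, \dots, \leftset^W_L$ are \emph{pairwise disjoint}, since a cell $(j,k)$ can belong to $\leftset^W_i$ only for the single index $i = \max(j,k)$.

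Next I would establish the key crossing lemma: every warping path $\path$ obeying the boundary, continuity/monotonicity and window constraints meets each band $\leftset^W_i$ in at least one link. To see this, track the integer quantity $\max(j,k)$ along the links of $\path$. By the boundary condition it equals $1$ at $\path_1=(1,1)$ and $L$ at $\path_\pathlength=(L,L)$, and each permitted step $(+1,0)$, $(0,+1)$, $(+1,+1)$ leaves it unchanged or increases it by exactly $1$. A nondecreasing integer sequence that starts at $1$, ends at $L$, and never jumps by more than $1$ must attain every value in $\{1,\dots,L\}$; hence for each $i$ some link $(j,k)$ of $\path$ has $\max(j,k)=i$. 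The window constraint $|j-k|\le W$ then gives $\min(j,k)=\max(j,k)-|j-k|\ge i-W$, and since indices start at $1$ we also have $\min(j,k)\ge 1$, so $\min(j,k)\ge\max(1,i-W)$ and this link lies in $\leftset^W_i$ by the reformulation above.

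With these two facts in hand I would assemble the bound. Let $\path$ be the optimal warping path, whose cost $D(L,L)=\sum_{k=1}^{\pathlength}\delta(A_{\path_k^1},B_{\path_k^2})$ is the sum under the square root in Equation~\ref{eqn:dtw}. For each $i$ the crossing lemma lets me pick a link $(j_i,k_i)\in\path\cap\leftset^W_i$. Because the bands are disjoint, these links are distinct links of $\path$, and since every $\delta\ge 0$,
\[
\sum_{i=1}^L \min_{(j,k)\in \leftset^W_i}\delta(A_j,B_k) \;\le\; \sum_{i=1}^L \delta(A_{j_i},B_{k_i}) \;\le\; D(L,L),
\]
the last step holding because $\{(j_i,k_i)\}$ is a set of distinct links contained in $\path$. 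This is the claimed bound on $\cdtw(A,B)$.

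The step I expect to be the main obstacle, or at least the one most easily gotten wrong, is the control of double-counting in the final sum: a priori, charging a minimum to each of the $L$ bands could overshoot the single path cost if several bands were served by the same path cell. The disjointness observation is exactly what forbids this, which is why I would present the reformulation of $\leftset^W_i$ first: it delivers both the crossing lemma (via the window constraint) and disjointness (via uniqueness of $\max(j,k)$) essentially for free.
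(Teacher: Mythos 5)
Your proof is correct and follows essentially the same strategy as the paper's: decompose the path cost over the bands $\leftset^W_i$, show that every admissible warping path meets each band, and then sum the per-band minima. The only differences are presentational --- you prove the crossing lemma by tracking the monotone quantity $\max(j,k)$ along the path (which cannot skip an integer value) rather than by the paper's case analysis on whether the first or second index reaches $i$ first, and you make explicit the pairwise disjointness of the bands via the characterization $\leftset^W_i=\{(j,k):\max(j,k)=i,\ \min(j,k)\geq\max(1,i-W)\}$, a fact the paper uses only implicitly when it rewrites the path cost as $\sum_{i=1}^L\sum_{(j,k)\in\leftset^W_i\cap\path}\delta(A_j,B_k)$; both refinements are sound and, if anything, tighten the exposition.
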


\begin{proof}
The continuity constraint requires that for all $1\leq i\leq L$, any warping path $\path$ must include $(i,p)$ and $(q,i)$, for some $i{-}W\leq p\leq i{+}W$ and $i{-}W\leq q\leq i{+}W$.  Either the indexes for both $A$ and $B$ reach $i$ in the same pair and $p=q=i$, or one of the indexes must reach $i$ before the other, and $p<q$ or $p>q$. If $p=q=i$, $(i,i)\in\path$.  If $p<q$, $\path$ must contain one of $(i,\max(1,i{-}W)), \ldots (i,i{-}1)$. If $p>q$, $\path$ must contain one of $(\max(1,i{-}W), i), \ldots (i{-}1, i)$. Thus, $\path$ must contain (at least) one of $\leftset^W_i$. It follows that

\vspace{-10pt}
\begin{align*}
    \DTW_W(A,B)&=\sum_{(j,k)\in\path}\delta(A_{j},B_{k})\\
    &=\sum_{i=1}^L\sum_{(j,k)\in \leftset^W_i\cap\path}\delta(A_j,B_k)\\
    &\geq\sum_{i=1}^L \min_{(j,k)\in \leftset^W_i}\delta(A_j,B_k).
\end{align*}
$\square$
\end{proof}

Figure~\ref{fig:leftmatrix} illustrates this lower bound in terms of the cost matrix with $W{=}\W$.
The columns are elements of $A$ and rows the elements of $B$.
The elements in the matrix show the pairwise distances of each point in the time series pair \Tpair{}. 
Successive $\leftset^W_i$ are depicted in alternating colors. 
The minimum distance in each $\leftset^W_i$ is set in bold type. 
The sum of these minimums provides a lower bound on the $\DTW$ distance.

\begin{figure}[t]
    \centering
    \includegraphics[width=0.62\columnwidth]{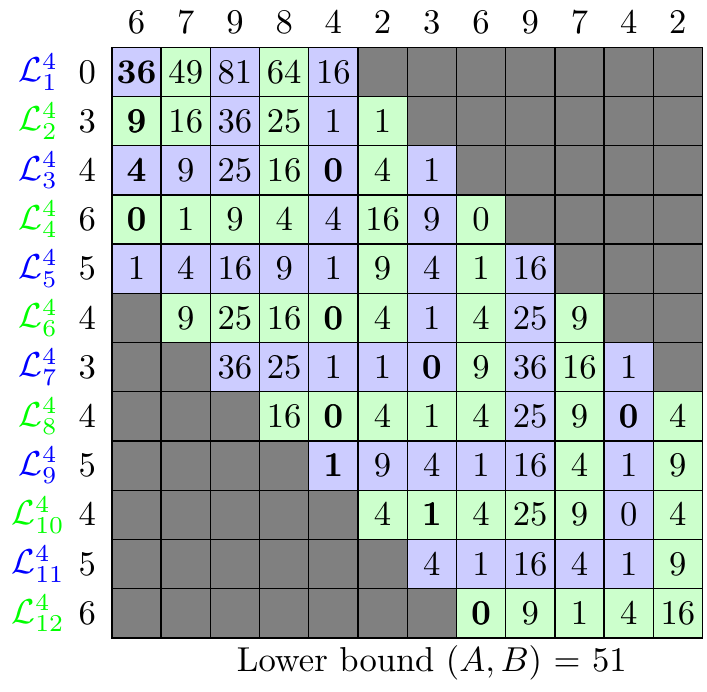}
\vspace*{-8pt}
    \caption{The cost matrix for calculating a lower bound using \emph{left} bands with $W{=}\W$.  
    }.
    \label{fig:leftmatrix}
\end{figure}

Working from the other end, the boundary constraints require that $\path_\pathlength{=}(L,L)$, the bottom right of a cost matrix.
Continuity and monotonicity constraints ensure that at least one of the \emph{right} band 
\begin{align}
\rightset^W_i=&\{(\min(L,i{+}W),i),(\min(L,i{+}W)+1,i),\ldots,(i,i),\nonumber\\
&\hspace*{5pt}(i,i-1),\ldots,(i,\min(L,i{+}W)\} \nonumber
\end{align} is in every warping path. 
Thus, 
\vspace{-5pt}
\begin{equation}
    \DTW_W(A,B)\geq \sum_{i=1}^L \min_{(j,k)\in \rightset^W_{i}}\delta(A_j,B_k).
\end{equation}

Figure~\ref{fig:rightmatrix} illustrates this lower bound in terms of the cost matrix.  
The proof of correctness of this bound is a trivial variant of the proof for Theorem \ref{thm:left}.

\begin{figure}[t]
    \centering
    \includegraphics[width=0.64\columnwidth]{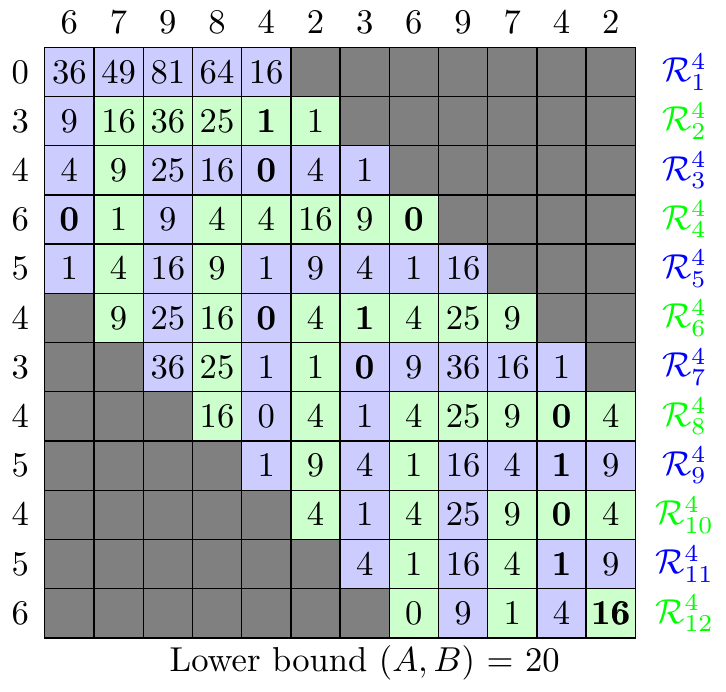}
\vspace*{-8pt}
    \caption{The cost matrix for calculating a lower bound using \emph{right} bands with $W{=}\W$. 
    }
    \label{fig:rightmatrix}
\end{figure}

On the other hand,
\lbkeogh{} uses the minimum value from each band in Figure \ref{fig:keogh-matrix} so long as $A_i>\mathbb{U}_i$ or $A_i<\mathbb{L}_i$.
When $\mathbb{L}_i^B\leq A_i\leq \mathbb{U}_i^B$, 
the band is set in gray. 
For other bands, the minimum is set in bold.
Then the sum over all minimum distances in non-gray bands gives \lbkeogh{}.
It is notable that the leftmost of the left bands and the rightmost of the right bands contain fewer distances than any of the \lbkeogh{} bands.  
All things being equal, 
on average the minimum of a smaller set of distances should be greater than the minimum of a larger set.
Further, because there are fewer distance computations in these few bands, 
it is feasible to take the true minimum of the band,
rather than taking an efficiently computed lower bound on the minimum, as does \lbkeogh{}.

\begin{figure}[t]
    \centering
    \includegraphics[width=.56\columnwidth]{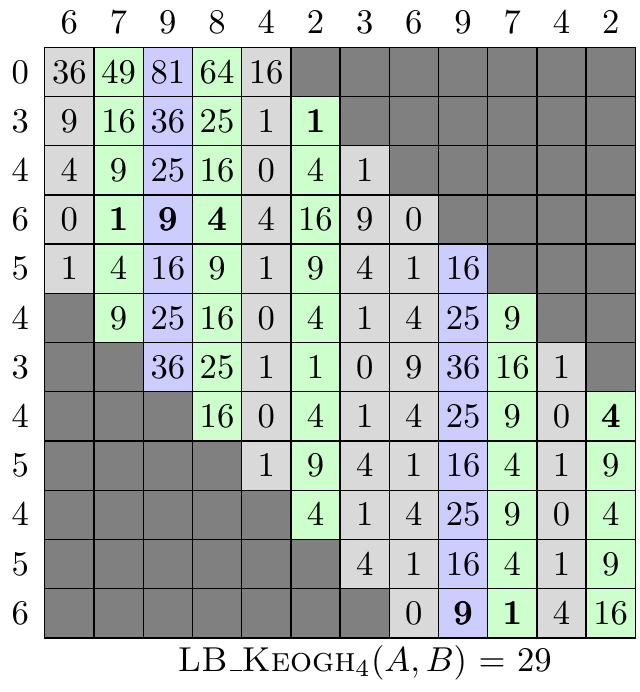}
\vspace*{-8pt}
\caption{The cost matrix for calculating $\LBKeogh_{\W}(A,B)$ with $W{=}\W$.
    } 
    \label{fig:keogh-matrix}
\end{figure}

\subsection{Enhanced Lower Bound} \label{sec:lbtan}
Based on these observations, 
our proposed lower bound exploits the tight leftmost and rightmost bands, 
but uses the \lbkeogh{} bands in the centre section where the left and right bands are larger,
and hence less tight and more expensive to compute.  
This is illustrated in Figure \ref{fig:tan-matrix}. 

\begin{figure}[t]
    \centering
    \includegraphics[width=.58\columnwidth]{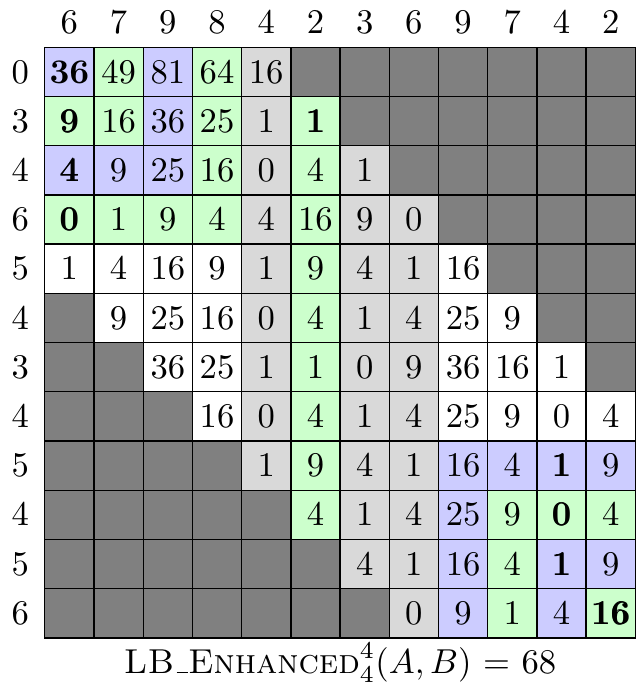}
\vspace*{-8pt}
    \caption{Cost matrix for calculating $\LBTan_4^4(A,B)$ with $V{=}4$ and $W{=}4$. 
    }
    \label{fig:tan-matrix}
\end{figure}
\lbtan{} is parametrized by a tightness parameter $V$, 
$1\leq V\leq L/2$, that specifies how many \emph{left} and \emph{right} bands are utilized.
This controls the speed-tightness trade-off. 
Smaller $V$s require less computation, 
but usually result in looser bounds, 
while higher values require more computation, but usually provide tighter bounds,
as illustrated in Figure \ref{fig:motivation}.
\lbtan{} is defined as follows 
\vspace{-8pt}
\begin{multline}
\LBTan^V_W(A,B)=\\
\sum_{i=1}^L
\begin{cases}
\min_{(j,k)\in\leftrs(i)}\delta(A_j,B_k) & \text{if } i\leq V \\
\min_{(j,k)\in\rightrs(i)}\delta(A_j,B_k) & \text{if } i> L-V \\
\delta(A_i,\mathbb{U}_i^B) & \text{if } A_i>\mathbb{U}_i^B\\
\delta(A_i,\mathbb{L}_i^B) & \text{if } A_i<\mathbb{L}_i^B\\
0 & \text{otherwise}
\end{cases} 
\label{eqn:lbtan}
\end{multline}

\noindent
where $\mathbb{U}_i^B$ and $\mathbb{L}_i^B$ are defined in Equations~\ref{eqn:upperenvelope} and \ref{eqn:lowerenvelope} respectively.

\begin{theorem}
For any two time series $A$  and $B$ of  length $L$, 
for any warping window, $W\leq L$,
and for any integer value $V\leq L/2$,
the following inequality holds: $\LBTan_W^V(A,B)\leq \DTW_W(A,B)$
\end{theorem}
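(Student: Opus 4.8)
The plan is to fix an optimal (cost-minimising) warping path $\path$ for $\DTW_W(A,B)$, so that $\DTW_W(A,B)=\sum_{(j,k)\in\path}\delta(A_j,B_k)$, and then to build an injective map $\phi$ from the summation indices $\{1,\dots,L\}$ to the links of $\path$ with the property that the $i$-th summand of $\LBTan_W^V(A,B)$ is at most the distance $\delta$ evaluated at the link $\phi(i)$. Given such a $\phi$, injectivity together with non-negativity of every per-link distance immediately yields $\LBTan_W^V(A,B)\le\sum_{(j,k)\in\path}\delta(A_j,B_k)=\DTW_W(A,B)$, since then each link of $\path$ is charged by at most one summand. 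The construction follows the three regimes of the definition of $\LBTan$.

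For the per-summand domination I would treat the three regimes separately. For $i\le V$ the summand is $\min_{(j,k)\in\leftset^W_i}\delta(A_j,B_k)$; reusing the argument of Theorem~\ref{thm:left}, which exhibits an element of $\leftset^W_i$ on any path, I set $\phi(i)$ to be the first link $(j,k)$ of $\path$ with $\max(j,k)=i$, so the summand is a minimum over $\leftset^W_i$ and is therefore at most $\delta$ at $\phi(i)$. Symmetrically, for $i>L-V$ the right-band summand is dominated by taking $\phi(i)$ to be the last link $(j,k)$ of $\path$ with $\min(j,k)=i$, which lies in $\rightset^W_i$ by the mirror image of that argument. For the central indices $V<i\le L-V$ the summand is the \lbkeogh{} term at $i$, and here I would use the standard property that this envelope term lower-bounds $\delta(A_i,B_k)$ for every $k$ with $|k-i|\le W$ (its three cases return the closest value of $B$ reachable within the window, or $0$ when $A_i$ already lies between $\mathbb{L}_i^B$ and $\mathbb{U}_i^B$); since $\path$ aligns $A_i$ with some $B_k$ and the window forces $|k-i|\le W$, I set $\phi(i)$ to that link, whence the summand is at most $\delta$ at $\phi(i)$.

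The crux of the proof, and the only part going beyond Theorem~\ref{thm:left}, is showing that $\phi$ is injective across the three regimes, and this is exactly where the hypothesis $V\le L/2$ enters. Within a single regime injectivity is routine: left charges carry distinct values of $\max(j,k)$, right charges distinct values of $\min(j,k)$, and central charges distinct first coordinates. For cross-regime collisions I would argue purely from coordinate ranges: a left charge has $\max(j,k)\le V$, so both its coordinates are $\le V$; a central charge has first coordinate in $(V,L-V]$; and a right charge has $\min(j,k)>L-V$, so both its coordinates exceed $L-V$. Because $V\le L/2$ forces $V\le L-V$, the first coordinates of left, central and right charges fall in the pairwise-disjoint ranges $[1,V]$, $(V,L-V]$ and $(L-V,L]$ respectively, so no link can be charged by two different regimes. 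The main obstacle is thus not a single inequality but this disjointness bookkeeping, and $V\le L/2$ is precisely the condition that keeps the central \lbkeogh{} band from overlapping either extremal band.

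Assembling these, the per-summand bounds give $\LBTan_W^V(A,B)\le\sum_{i=1}^{L}\delta(\phi(i))$, where $\delta(\phi(i))$ denotes the distance at link $\phi(i)$, and injectivity of $\phi$ gives $\sum_{i=1}^{L}\delta(\phi(i))\le\sum_{(j,k)\in\path}\delta(A_j,B_k)=\DTW_W(A,B)$, completing the argument. I expect the only remaining routine checks to be that $\max(j,k)$ is non-decreasing along $\path$ (so that the ``first link at level $i$'' is well defined, and analogously for the right bands) and the elementary case analysis verifying the \lbkeogh{} envelope inequality.
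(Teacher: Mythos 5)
Your proof is correct and takes essentially the same route as the paper's: your injective charging map $\phi$ is precisely a choice of one path link from each of the pairwise-disjoint regions $\leftset^W_1,\ldots,\leftset^W_V$, $\Vset_{V+1},\ldots,\Vset_{L-V}$, $\rightset^W_{L-V+1},\ldots,\rightset^W_L$ on which the paper's argument rests, and your coordinate-range bookkeeping (using $V\leq L/2$) is just an explicit justification of the paper's asserted mutual exclusivity of those regions. The three per-summand dominations likewise coincide with the paper's band-minimum and Keogh-envelope inequalities, so no new idea is introduced or missing.
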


\vspace{-10pt}
\begin{proof}
From the proof for Theorem \ref{thm:left}, for every $1\leq i \leq L$, $\path$ must contain (at least) one of $\leftset^W_i$ and with trivial recasting this also establishes that for every $1\leq i \leq L$, $\path$ must contain (at least) one of $\rightset^W_i$. 

To address the contribution of the $\lbkeogh$ inspired bridge between the $\leftset$s and $\rightset$s,
we introduce the notion of a vertical band $\Vset_i$ for element $A_i$.  
$\Vset_i=\{(i,j):\max(1,i-W)\leq j \leq \min(L, i+W)\}$ is the set of pairs containing $A_i$ that may appear in a warping path.
Note that $\leftset^W_1,\ldots,\leftset^W_V$, $\Vset_{V+1},\ldots,\Vset_{L-V}$ and $\rightset^W_{L-V+1},\ldots,\rightset^W_L$ are all mutually exclusive. None of these sets intersects any of the others. It follows 

\vspace{-10pt}
\begin{align*}
    \DTW_W&(A,B)\\
    &=\sum_{(j,k)\in\path}\delta(A_{j},B_{k})\\
    &\geq\sum_{i=1}^V\sum_{(j,k)\in \leftset^W_i\cap\path}\delta(A_j,B_k)\\
    &\hspace*{10pt}+ \sum_{i=V+1}^{L-V}\sum_{(j,k)\in \Vset_i\cap\path}\delta(A_j,B_k)\\
    &\hspace*{10pt}+ \sum_{i=L-V+1}^{L}\sum_{(j,k)\in \rightset^W_i\cap\path}\delta(A_j,B_k)\\
    &\geq\sum_{i=1}^V \min_{(j,k)\in \leftset^W_i}\delta(A_j,B_k)\\
    &\hspace*{10pt}+ \sum_{i=V+1}^{L-V} \min_{(j,k)\in \Vset_i}\delta(A_j,B_k)\\
    &\hspace*{10pt}+ \sum_{i=L-V+1}^L \min_{(j,k)\in \rightset^W_i}\delta(A_j,B_k)\\
     &\geq\sum_{i=1}^L 
    \begin{cases}
        \min_{(j,k)\in\leftrs(i)}\delta(A_j,B_k) & \text{if } i\leq V \\
        \min_{(j,k)\in\rightrs(i)}\delta(A_j,B_k) & \text{if } i> L-V \\
        \delta(A_i,\mathbb{U}_i^B) & \text{if } A_i>\mathbb{U}_i^B\\
        \delta(A_i,\mathbb{L}_i^B) & \text{if } A_i<\mathbb{L}_i^B\\
        0 & \text{otherwise}
    \end{cases} 
\end{align*}
$\square$
\end{proof}

To illustrate our approach, 
we present the differences between \lbkeogh{} and $\lbtan^4_{\W}$ with respect to $A$ in Figures \ref{fig:keogh-matrix} and \ref{fig:tan-matrix} respectively.
 
In Figure \ref{fig:keogh-matrix}, 
the $i^{th}$ column represents $\Vset_i$, the possible pairs for $A_i$.
The columns are greyed out if $\mathbb{L}_i^B\leq A_i\leq \mathbb{U}_i^B$, showing that they do not contribute to \lbkeogh{}.
For the remaining columns, the numbers in bold are the minimum distance of $A_i$ to a $B_j$ within $A_i$'s window, either $\mathbb{L}_i^B-A_i$ or $A_i-\mathbb{U}_i^B$. The lower bound is the sum of these values.

In Figure \ref{fig:tan-matrix},
alternate bands are set in alternating colors. 
The columns are greyed out if $V<i\leq L-V$ and $\mathbb{L}_i^B\leq A_i\leq \mathbb{U}_i^B$, 
showing that they do not contribute to $\lbtan^4_{\W}$. 
For the remaining columns, 
the numbers in bold are the minimum distance of $A_i$ to $B_j$ within the band.
The lower bound is the sum of these values.
These figures clearly show the differences of \lbkeogh{} and $\lbtan^4_{\W}$, where we take advantage of the tighter \emph{left} and \emph{right} bands. 

We apply a simple technique to make \lbtan{} more efficient and faster. 
In the na\"ive version, \lbtan{} has to compute the minimum distances of \leftset{}s and \rightset{}s.
Usually, these computations are very fast as \leftset{}s and \rightset{}s are much smaller compared to \Vset{}, especially when $L$ is long.
To optimise \lbtan{}, we can first  sum  the minimum distances for the \leftset{}s and \rightset{}s.
Then, if this sum is larger than the current distance to the nearest neighbor, $\mathcal{D}$, we can abort the computation for \Vset{}. 

Algorithm~\ref{alg:lbtan} describes our proposed lower bound. 
\begin{algorithm2e}[t]
\caption{\lbtan{}$(A,B,\mathbb{U}_i^B,\mathbb{L}_i^B,W,V,\mathcal{D})$}
\label{alg:lbtan}
\DontPrintSemicolon
\SetInd{0em}{0.75em}
\KwIn{$A$: Time series A}
\KwIn{$B$: Time series B}
\KwIn{$\mathbb{U}_i^B$: Upper Envelope for B}
\KwIn{$\mathbb{L}_i^B$: Lower Envelope for B}
\KwIn{$W$: Warping window}
\KwIn{$V$: Speed-Tightness parameter}
\KwIn{$\mathcal{D}$: Current distance to NN}
$res\gets \delta(A_1,B_1) + \delta(A_L,B_L)$ \\
$n\_bands\gets \min(L/2, V)$\\
\tcp{Do \leftset{}, \rightset{} bands} 
\For{$i \gets 2$ \KwTo $n\_bands$}{
	$min_{\leftset}\gets \delta(A_i,B_i)$ \\
    $min_{\rightset}\gets \delta(A_{L-i+1},B_{L-i+1})$ \\
    \For{$j\gets \max(1, i-W)$ \KwTo $i-1$}{
    	$min_{\leftset}\gets \min(min_{\leftset},\delta(A_i,B_j))$ \\
    	$min_{\leftset}\gets \min(min_{\leftset},\delta(A_j,B_i))$ \\
        $min_{\rightset}\gets \min(min_{\rightset},\delta(A_{L{-}i{+}1},B_{L{-}j{+}1}))$ \\
        $min_{\rightset}\gets \min(min_{\rightset},\delta(A_{L{-}j{+}1},B_{L{-}i{+}1}))$ \\
    }
    $res\gets res+min_{\leftset}+min_{\rightset}$ \\
}

\lIf{$res \geq \mathcal{D}$}{\Return $\infty$}
\tcp{Do \lbkeogh{}} 
\For{$i \gets n\_bands+1$ \KwTo $L-n\_bands$}{
	\lIf{$A_i> \mathbb{U}_i^B$}{$res\gets res + \delta(A_i,\mathbb{U}_i^B)$}
    \lElseIf{$A_i< \mathbb{L}_i^B$}{$res\gets res + \delta(A_i,\mathbb{L}_i^B)$}
}
\Return $res$
\end{algorithm2e}%
First, we compute the distance of the first and last points as set by the boundary condition. 
In line 2, we define the number of \leftset{} and \rightset{} bands to utilise. 
This number depends on the warping window, $W$, as we can only consider the points within $W$ no matter how big $V$ is.
Line 3 to 11 computes the sum of the minimum distances for \leftset{} and \rightset{}.
If the sum is larger than the current distance to the nearest neighbor, we skip the computation in line 12.
Otherwise, we do standard \lbkeogh{} for the remaining columns in lines 13 to 15.







\section{Empirical Evaluation} \label{sec:experiments}
Our experiments are divided into two parts.
We first study the effect of the tightness parameter $V$ in \lbtan{}. 
Then we show how well \lbtan{} can speed up $\NNDTW$ compared to the other lower bounds.
We used all the 85 UCR benchmark datasets \cite{ucrarchive} and the given train/test splits.
The relative performance of different lower bounds varies greatly with differing window sizes. 
In consequence we conduct experiments across a variety of different window sizes, drawn from two sets of values.
The set $W{=}\{1,\ldots,10\}$, 
spans the best warping windows for most of the UCR benchmark datasets. 
The set, $W{=}\{0.1\cdot L,0.2\cdot L,\ldots,L\}$ shows that using $\NNDTW$ with \lbtan{} is always faster across the broad spectrum of all possible windows.

A $\NNDTW$ with lower bound search can be further sped up by ordering the candidates in the training set based on a proxy for their relative distances to the query, such as a lower bound on that distance \cite{tan2017indexing}. 
However, using a lower bound to order the candidates would unfairly advantage whichever bound was selected.
Hence we order the training set by their Euclidean distance to the query time series (an upper bound on their true distance) and start with the candidate that gives the smallest Euclidean distance. 
Note that our \lbtan{} has even greater advantage if random order is employed.

All experiments were optimised and implemented in Java 8 and conducted on a 64-bit Linux AMD Opteron 62xx Class CPU @2.4GHz machine with 32GB RAM and 4 CPUs.
Our source code are open-source at \githubRepo{} and
the full results at \ourRepo{}.


\subsection{How to choose the right tightness parameter for \lbtan{}?}
Recall that our \lbtan{} is parameterized by a tightness parameter $V$ that specifies the number of bands to be used. 
This parameter controls the speed tightness trade-off.
Higher values require more computations but usually gives tighter bounds. 
We conducted a simple experiment by recording the classification time of $\NNDTW$ with \lbkeogh{} as the baseline and 
\lbtan{} with different tightness parameter in the range of $V{=}\{1,\ldots,20\}$. 
Note that all the required envelopes have been pre-computed at training time and the time is not included in the classification time. 
Then the classification time of $\NNDTW$ with \lbtan{} is normalised by \lbkeogh{}.
Finally the geometric mean is computed over all 85 datasets. 

The results are presented in Figure~\ref{fig:geomean normalised by keogh} where we show the performance for a subset of windows.
The x axis shows the different $V$s, 
the y axis shows the geometric mean of the normalised time.
Ratios below 1 (under the red line) means that \lbtan{} is faster than \lbkeogh{} and 
smaller ratio means faster \lbtan{}. 

\begin{figure}[t]
    \centering
    \includegraphics[width=0.8\columnwidth,trim=10pt 0pt 10pt 10pt]{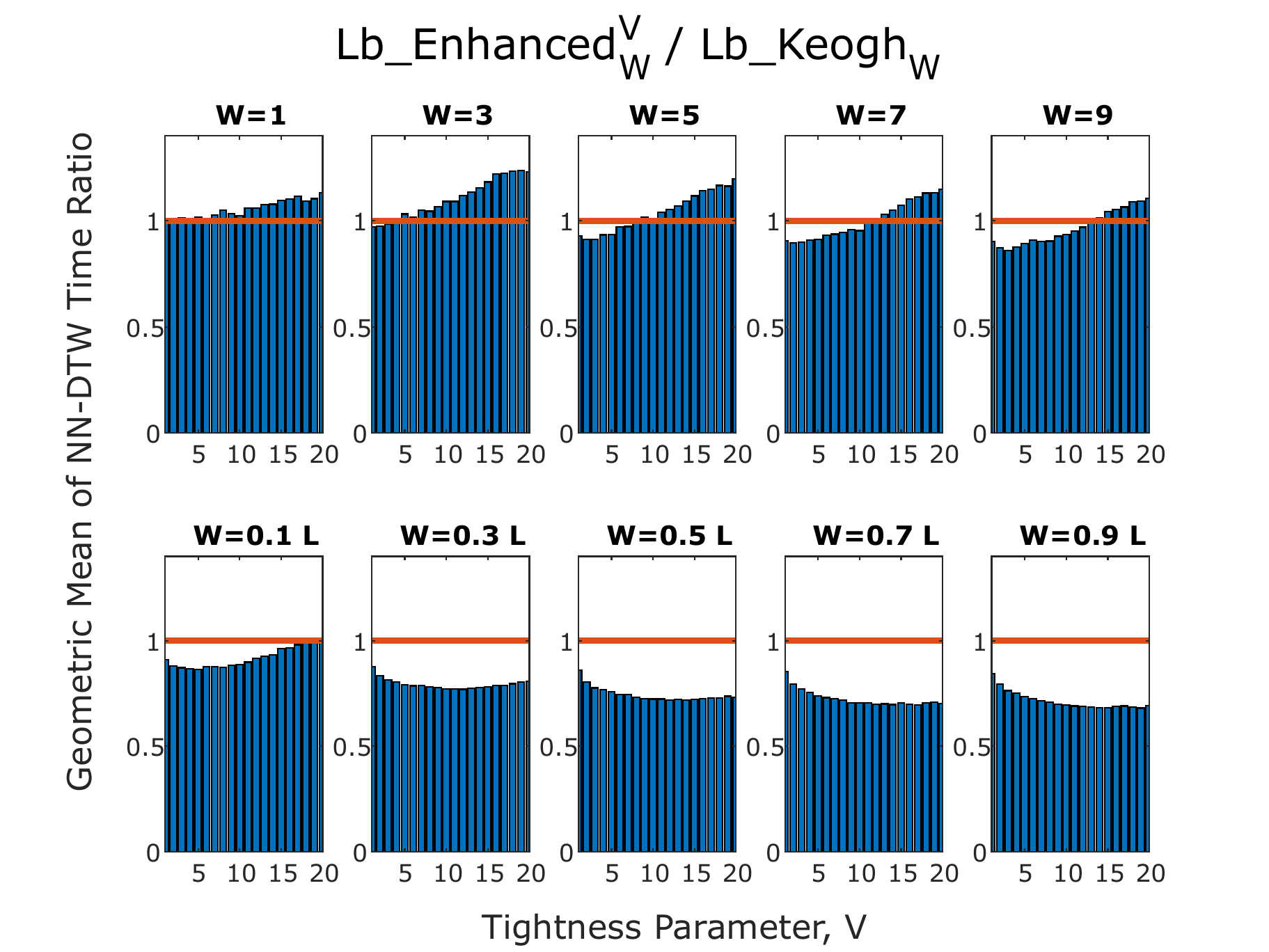}
    \caption{Geometric mean of $\NNDTW$ classification time with \lbtan{} of different tightness parameter $V$ normalised by \lbkeogh{}.
    Smaller ratio means faster, and values below 1 (under the red line) indicate \lbtan{} is faster than \lbkeogh{}.}
    \label{fig:geomean normalised by keogh}
\end{figure}

These plots show that the optimal value of $V$ increases with $W$.
At $W{=}1$, only $V{=}1$ and $V{=}2$ outperform \lbkeogh.
At $W{=}3$, all $V<5$ prove to speed up $\NNDTW$ more than \lbkeogh{}, and subsequently $V<10$ for $W{=}5$ and $V<15$ for $W{=}7$ and $W{=}9$. 
At larger window sizes, 
all $V<20$ are more effective at speeding up $\NNDTW$ than \lbkeogh{}. 
For $W{=}0.1\times L$, $V=5$ proves to be most effective. 
For $W\geq 0.3\times L$,
there are a wide range of values of $V$ with very similar performance.
One reason for this is that window size is not the only factor that affects the optimal value of $V$. Series length and the amount of variance in the sequence prefixes and suffixes are further relevant factors. 
$V{=}5$ provides strong performance across a wide range of window sizes.
In consequence, in the next section we choose $V{=}5$ and compare the performance of \lbtanfive{} to other lower bounds.

\subsection{Speeding up NN-DTW with \lbtan{}}
We compare our proposed lower bounds against four key existing alternatives, 
in total 5 lower bounds: 
\begin{itemize}

\item \textbf{\lbkim{}: }The original \lbkim{} proposed in \cite{kim2001index} is very loose and incomparable to the other lower bounds. 
To make it tighter and comparable, instead of the maximum, 
we take the sum of all the four features without repetitions.
\item \textbf{\lbkeogh{}: }We use the original implementation of \lbkeogh{} proposed in \cite{keogh2005exact}.
\item \textbf{\lbimproved{}: }We use the original implementation of \lbimproved{} and the optimised algorithm to compute the projection envelopes for \lbimproved{} proposed in \cite{lemire2009faster}.
\item \textbf{\lbnew{}: }We use the original implementation of \lbnew{} proposed in \cite{shen2018accelerating}. 
\item \textbf{\lbtan{}: }We use \lbtanfive{}, selecting $V{=}5$ as it provides reasonable speed up across a wide range of window sizes.
\end{itemize}

Note that \lbyi{} was omitted because it is similar to \lbkeogh{} when $W{=}L$.
Similar to before, we record the classification time of $\NNDTW$ with the various lower bounds.
For each dataset we determine the rank of each bound, the fastest receiving rank 1 and the slowest rank 5. 
Figure~\ref{fig:cd time} shows the critical difference (CD) diagram comparing the classification time ranks of each lower bound for $W{=}\{3,6,10,0.1\cdot L,0.5\cdot L\}$.
The results for the remaining of the windows can be found in our supplementary paper and \ourRepo{}.
Each plot shows the average rank of each lower bound (the number next to the name). 
Where the ranks are not significantly different (difference less than CD), their corresponding lines are connected by a solid black line \cite{demvsar2006statistical}. 
Thus, for $W{=}10$, \lbtanfive{} has significantly lower average rank than any other bound and the average ranks of \lbkeogh{} and \lbimproved{} do not differ significantly but are significantly lower than those of \lbnew{} and \lbkim{}; which in turn do not differ significantly from one another.

\begin{figure}[t]
    \centering
    \begin{subfigure}[b]{0.9\columnwidth}
        \centering
        \includegraphics[width=\columnwidth, trim=70pt 165pt 20pt 20pt]{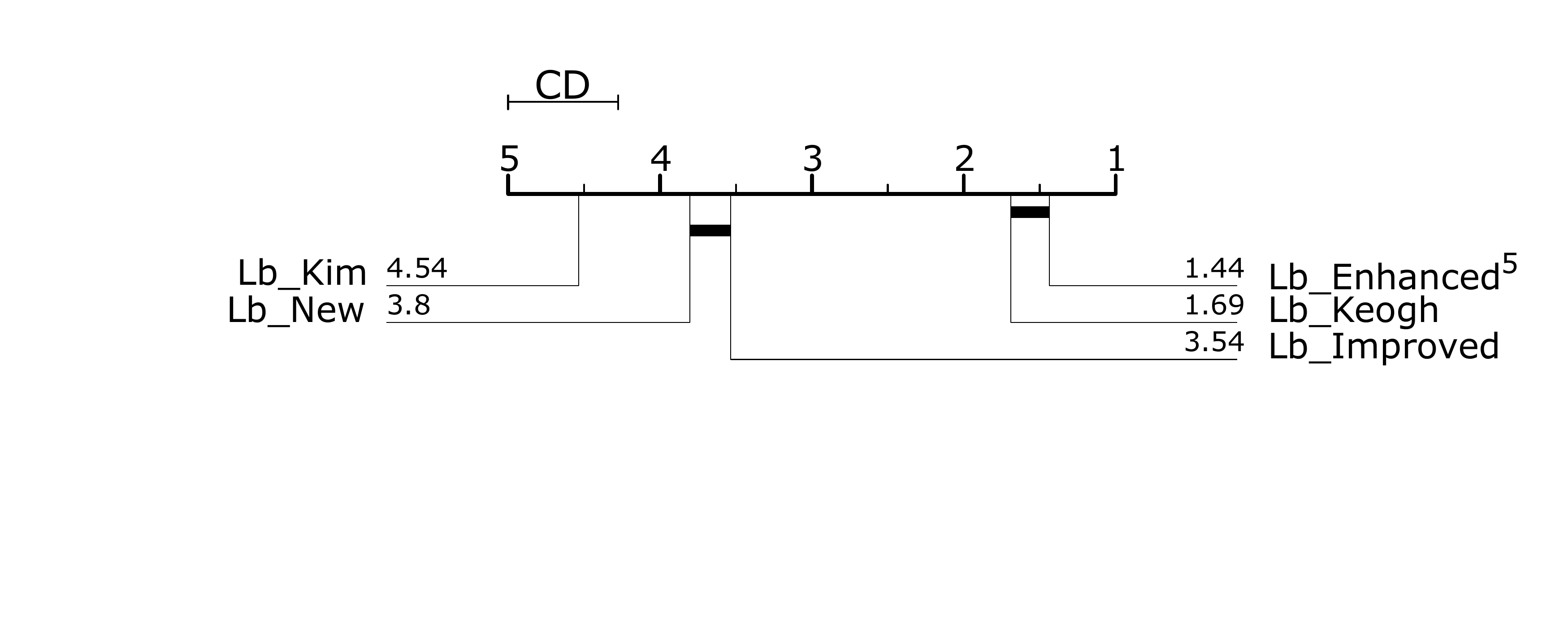}
        \caption{$W{=}3$}
        \label{fig:time w3}
    \end{subfigure}
    \begin{subfigure}[b]{0.9\columnwidth}
        \includegraphics[width=\columnwidth, trim=70pt 175pt 20pt 0pt]{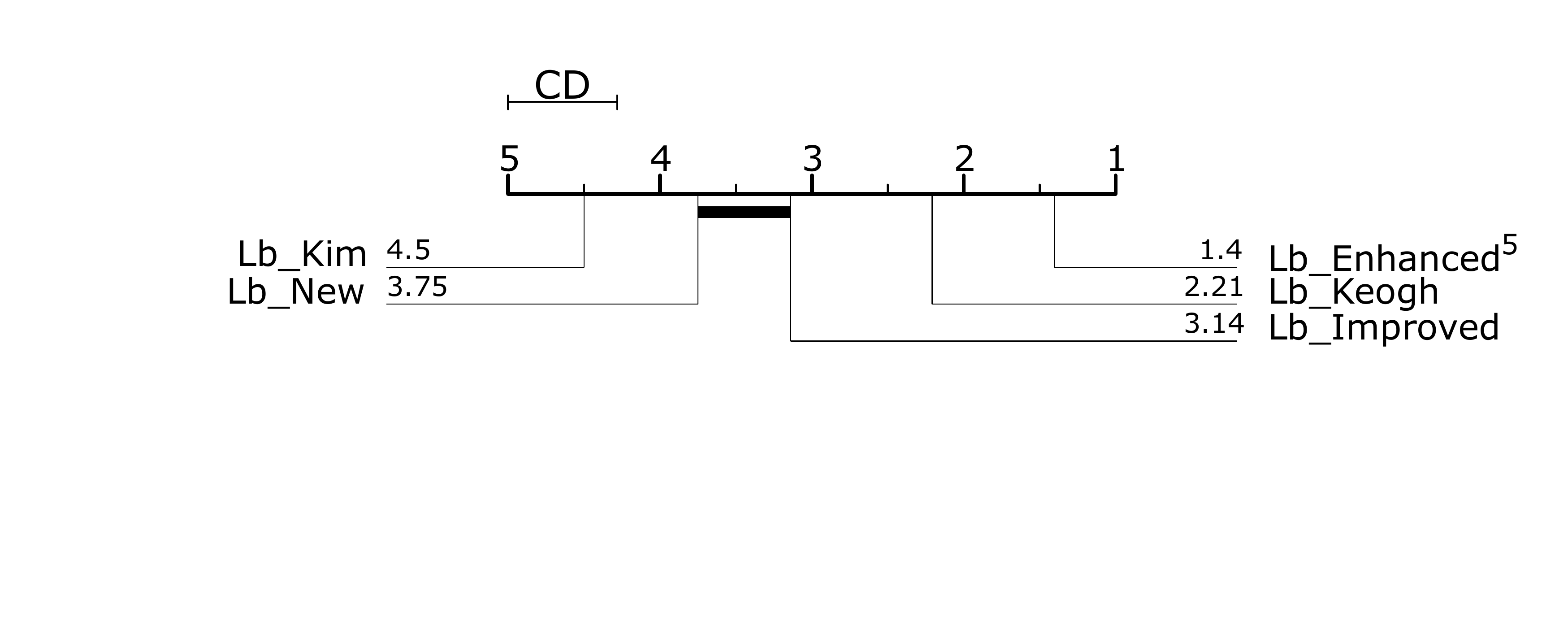}
        \caption{$W{=}6$}
        \label{fig:time w6}
    \end{subfigure}
    \begin{subfigure}[b]{0.9\columnwidth}
        \centering
        \includegraphics[width=\columnwidth, trim=70pt 165pt 20pt 0pt]{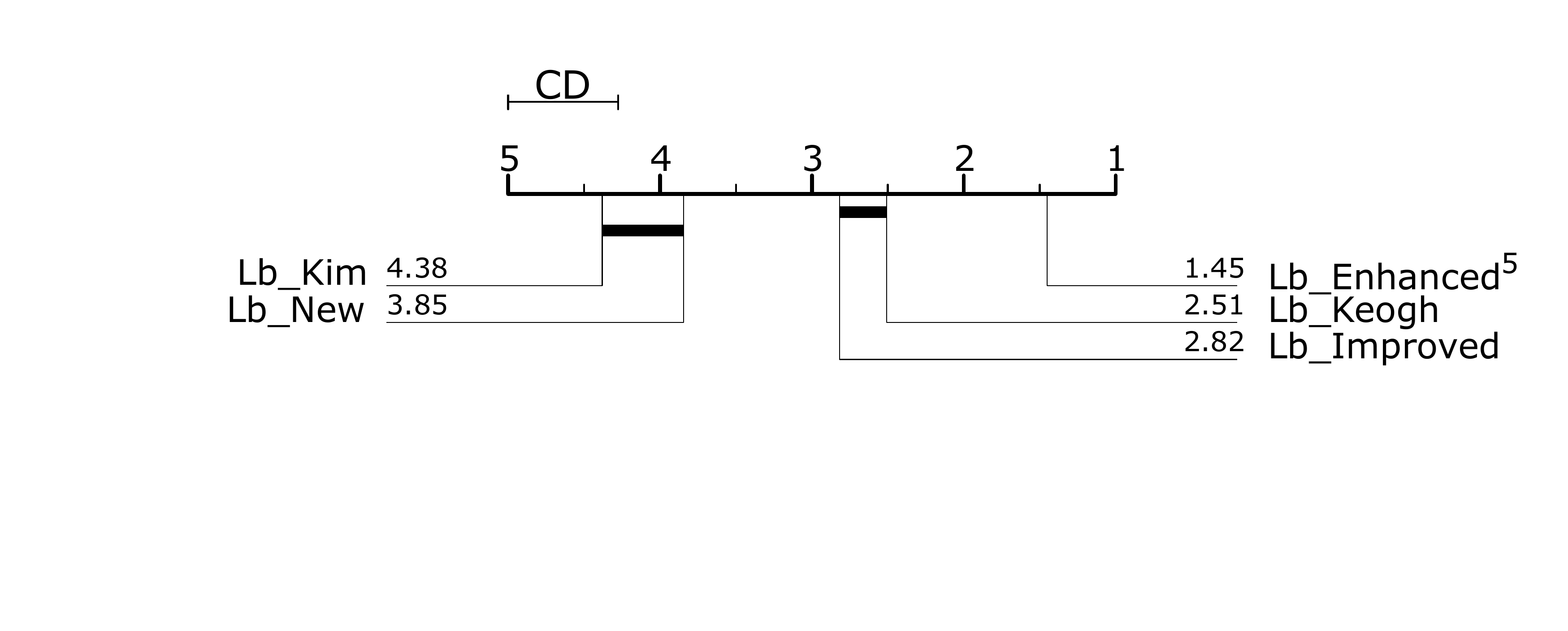}
        \caption{$W{=}10$}
        \label{fig:time w10}
    \end{subfigure}
    \begin{subfigure}[b]{0.9\columnwidth}
        \centering
        \includegraphics[width=\columnwidth, trim=70pt 160pt 20pt 0pt]{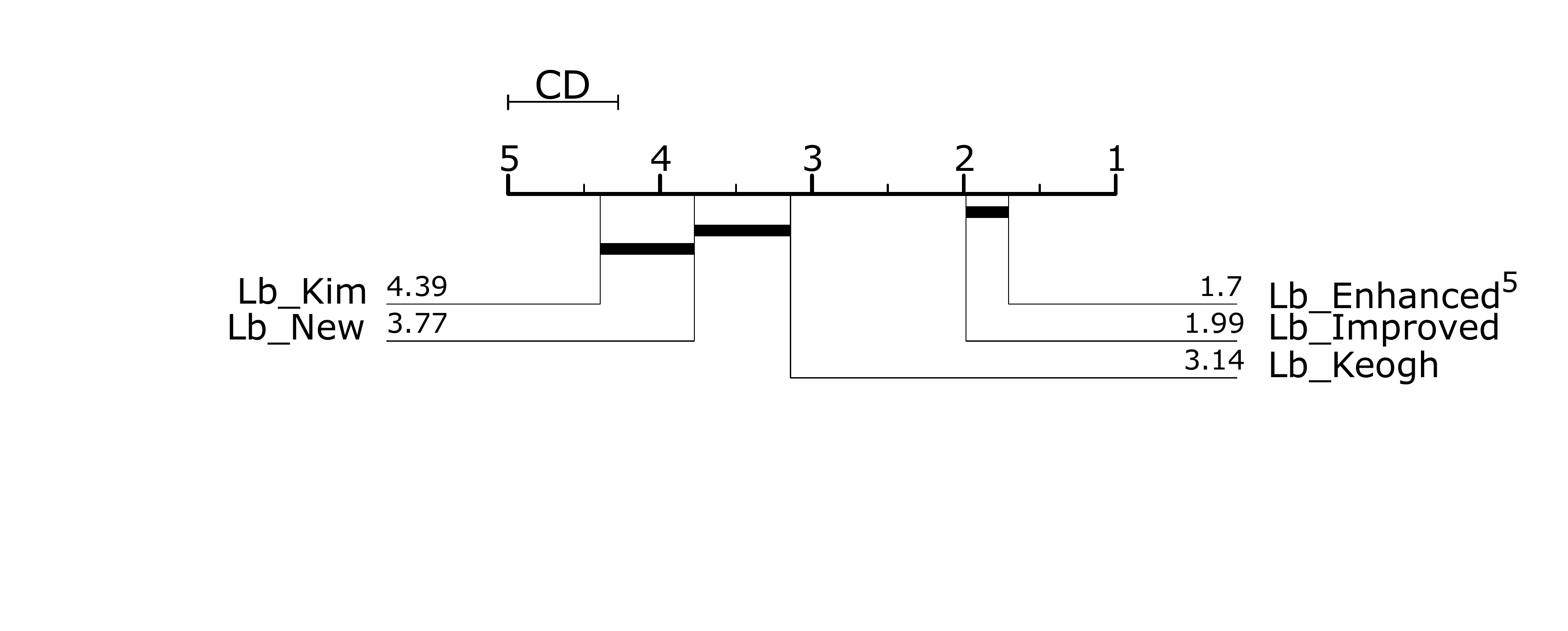}
        \caption{$W{=}0.1\cdot L$}
        \label{fig:time r10}
    \end{subfigure}
    \begin{subfigure}[b]{0.9\columnwidth}
        \centering
        \includegraphics[width=\columnwidth, trim=70pt 175pt 20pt 0pt]{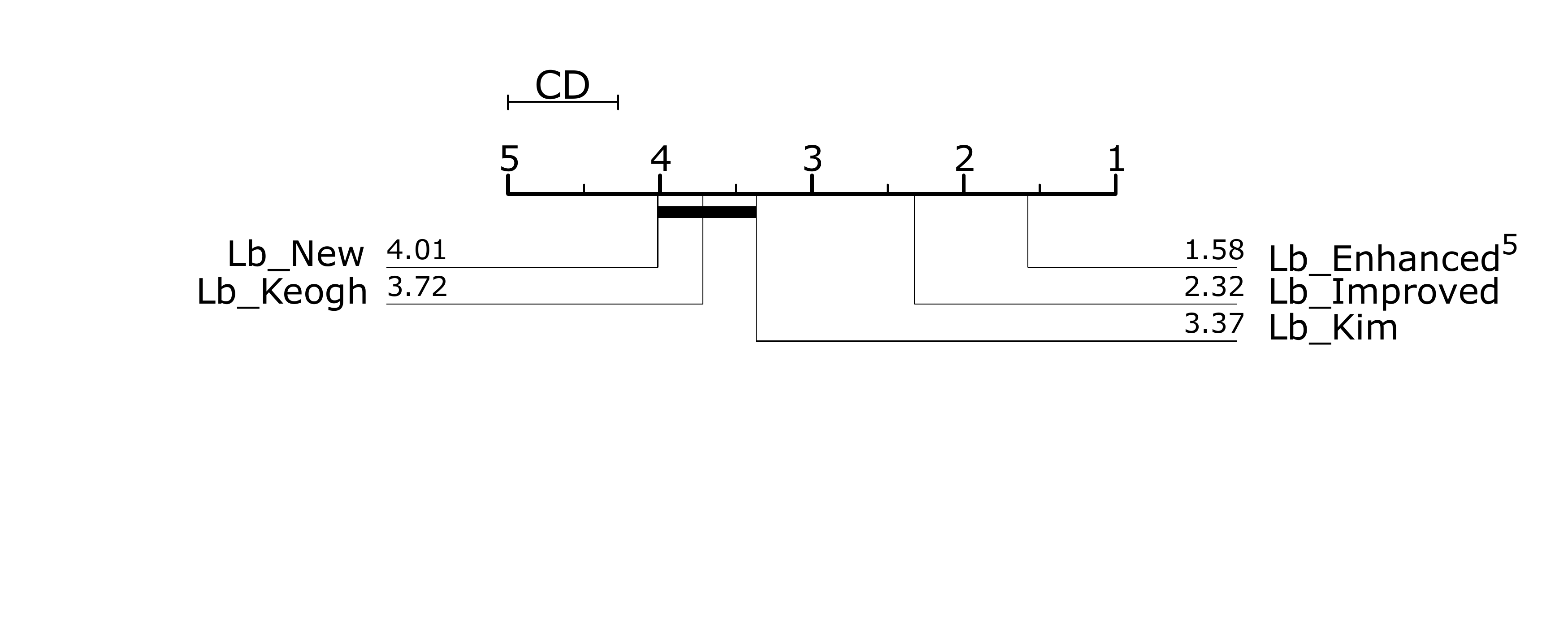}
        \caption{$W{=}0.5\cdot L$}
        \label{fig:time r50}
    \end{subfigure}
    \caption{Ranking of all lower bounds in terms of $\NNDTW$ classification time}
    \label{fig:cd time}
\end{figure}

Our \lbtanfive{} has the best average rank of all the lower bounds at all window sizes, significantly so at $W{=}6$ to 10 and $0.5\cdot L$ to $L$.
For smaller windows, \lbkeogh{} is the best of the remaining bounds. 
For larger windows, the tighter, but more computationally demanding \lbimproved{} comes to the fore.

We further extend our analysis by computing the speed up gained from  \lbtanfive{} relative to the other lower bounds. 
We compute the speedup of \lbtanfive{} to all the other lower bounds for $W=\{1,\ldots 10\}$ and present the geometric mean (average) in Figure~\ref{fig:geomean to enhanced5}.
The results show that \lbtanfive{} is consistently faster than all the other lower bounds. 

\begin{figure}[t]
\centering
\includegraphics[width=0.8\columnwidth,trim=10pt 0pt 10pt 10pt]{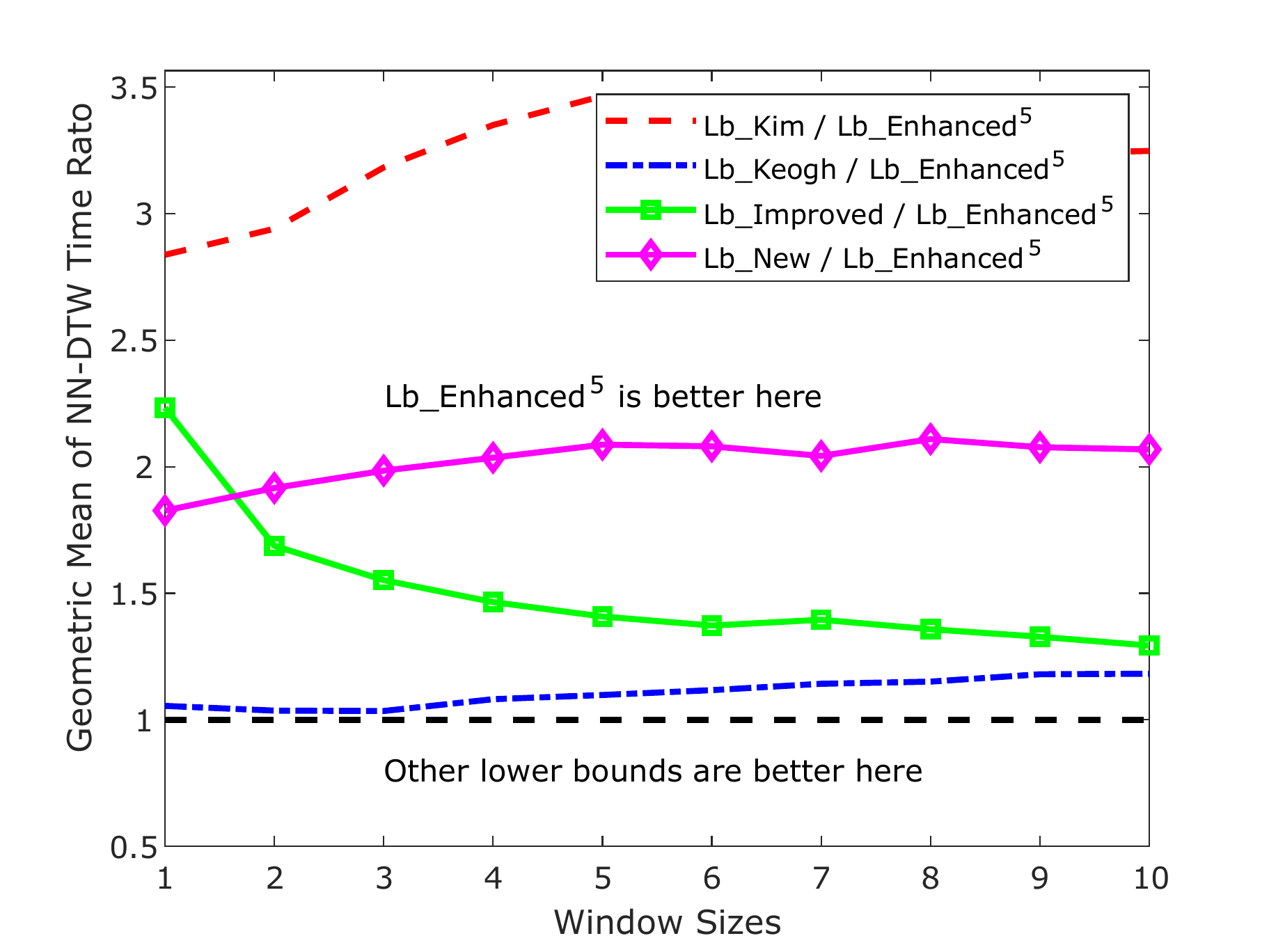}
\vspace*{-8pt}\caption{Geometric mean (average) ratio of classification time for major existing lower bounds to our proposed new lower bound \lbtanfive{} across all benchmark datasets \cite{ucrarchive} at $W=\{1,\ldots 10\}$.}
\label{fig:geomean to enhanced5}
\end{figure}



It might be thought that our experimental comparison has unfairly penalized \lbkeogh{} and \lbnew{} relative to \lbimproved{} and \lbtan{}, 
as only the latter use a form of early abandoning \cite{rakthanmanon2012searching}. 
However, \lbimproved{} starts with \lbkeogh{} and 
\lbtan{} uses \lbkeogh{} for most of the sequence.  
Hence, each of these could benefit as much as would \lbkeogh{} from the adoption of early abandoning in the \lbkeogh{} process.

\section{Conclusion and future work}
\label{sec:conclusion}
In conclusion, 
we proposed \lbtan{}, a new lower bound for $\DTW$.
The speed-tightness trade-off of \lbtan{} results in faster lower bound search for $\NNDTW$ than any of the previous established bounds at all window sizes. 
We expect it to be similarly effective at a wide range of nearest neighbor retrieval tasks under $\DTW$.
We showed that choosing a small tightness parameter $V$ is sufficient to effectively speed up $\NNDTW$.
Although it is possible to learn the best $V$ for a dataset (which will be future work),
our results show that when $V{=}5$, 
$\NNDTW{}$ with \lbtanfive{} is faster and more efficient than with the existing lower bounds for all warping window sizes across 85 benchmark datasets.

In addition, there is potential to replace \lbkeogh{} by \lbimproved{} within \lbtan{}.
This would increase the computation time, 
but the strong performance of \lbimproved{} suggests it should result in a powerful trade-off between time and tightness especially at larger windows.
Finally, since \lbkeogh{} is not symmetric with respect to $A$ and $B$, 
\lbtan{} is not symmetric too. 
Thus, $\max(\lbtan{}(A,B), \lbtan{}(B,A))$ is also a useful bound.
Our proposed lower bound could also be cascaded \cite{rakthanmanon2012searching}, 
which may further improve pruning efficiency.

\section{Acknowledgement}
This material is based upon work supported by the Air Force Office of Scientific Research, Asian Office of Aerospace Research and Development (AOARD) under award number FA2386-17-1-4036 and the Australian Research Council under award DE170100037. 
The authors would like to also thank Prof Eamonn Keogh and all the people who have contributed to the UCR time series classification archive.

\vspace{-7pt}
\bibliographystyle{siam}
{\tiny
\bibliography{biblio}

\begin{thebibliography}{10}

\bibitem{bagnall2017great}
{\sc A.~Bagnall, J.~Lines, A.~Bostrom, J.~Large, and E.~Keogh}, {\em The great
  time series classification bake off: a review and experimental evaluation of
  recent algorithmic advances}, Data Mining and Knowledge Discovery, 31 (2017),
  pp.~606--660.

\bibitem{bagnall2015time}
{\sc A.~Bagnall, J.~Lines, J.~Hills, and A.~Bostrom}, {\em Time-series
  classification with {COTE}: the collective of transformation-based
  ensembles}, IEEE Transactions on Knowledge and Data Engineering, 27 (2015).

\bibitem{ucrarchive}
{\sc Y.~Chen, E.~Keogh, B.~Hu, N.~Begum, A.~Bagnall, A.~Mueen, and G.~Batista},
  {\em {The UCR Time Series Classification Archive}}, 7 2015.
\newblock \url{www.cs.ucr.edu/~eamonn/time_series_data/}.

\bibitem{demvsar2006statistical}
{\sc J.~Dem{\v{s}}ar}, {\em Statistical comparisons of classifiers over
  multiple data sets}, Journal of Machine learning research, 7 (2006),
  pp.~1--30.

\bibitem{itakura1975minimum}
{\sc F.~Itakura}, {\em Minimum prediction residual principle applied to speech
  recognition}, IEEE Transactions on Acoustics, Speech, and Signal Processing,
  23 (1975), pp.~67--72.

\bibitem{keogh2005exact}
{\sc E.~Keogh and C.~A. Ratanamahatana}, {\em Exact indexing of dynamic time
  warping}, Knowledge and information systems, 7 (2005), pp.~358--386.

\bibitem{kim2001index}
{\sc S.-W. Kim, S.~Park, and W.~W. Chu}, {\em An index-based approach for
  similarity search supporting time warping in large sequence databases}, in
  Data Engineering, 2001. Proceedings. 17th International Conference on, IEEE,
  2001, pp.~607--614.

\bibitem{lemire2009faster}
{\sc D.~Lemire}, {\em Faster retrieval with a two-pass dynamic-time-warping
  lower bound}, Pattern recognition, 42 (2009), pp.~2169--2180.

\bibitem{lines2015time}
{\sc J.~Lines and A.~Bagnall}, {\em Time series classification with ensembles
  of elastic distance measures}, Data Mining and Knowledge Discovery, 29
  (2015), pp.~565--592.

\bibitem{petitjean2014dynamic}
{\sc F.~Petitjean, G.~Forestier, G.~I. Webb, A.~E. Nicholson, Y.~Chen, and
  E.~Keogh}, {\em Dynamic time warping averaging of time series allows faster
  and more accurate classification}, in Data Mining (ICDM), 2014 IEEE
  International Conference on, IEEE, 2014, pp.~470--479.

\bibitem{rakthanmanon2012searching}
{\sc T.~Rakthanmanon, B.~Campana, A.~Mueen, G.~Batista, B.~Westover, Q.~Zhu,
  J.~Zakaria, and E.~Keogh}, {\em Searching and mining trillions of time series
  subsequences under dynamic time warping}, in Proceedings of the 18th ACM
  SIGKDD international conference on Knowledge discovery and data mining, ACM,
  2012, pp.~262--270.

\bibitem{ratanamahatana2004making}
{\sc C.~Ratanamahatana and E.~Keogh}, {\em Making time-series classification
  more accurate using learned constraints}, in SIAM SDM, 2004.

\bibitem{ratanamahatana2005three}
\leavevmode\vrule height 2pt depth -1.6pt width 23pt, {\em Three myths about
  {DTW} data mining}, in SIAM SDM, 2005, pp.~506--510.

\bibitem{sakoe1972dynamic}
{\sc H.~Sakoe and S.~Chiba}, {\em A dynamic programming approach to continuous
  speech recognition}, in International Congress on Acoustics, vol.~3, 1971,
  pp.~65--69.

\bibitem{shen2018accelerating}
{\sc Y.~Shen, Y.~Chen, E.~Keogh, and H.~Jin}, {\em Accelerating time series
  searching with large uniform scaling}, in Proceedings of the 2018 SIAM
  International Conference on Data Mining, SIAM, 2018, pp.~234--242.

\bibitem{tan2018efficient}
{\sc C.~W. Tan, M.~Herrmann, G.~Forestier, G.~I. Webb, and F.~Petitjean}, {\em
  Efficient search of the best warping window for dynamic time warping}, in
  Proceedings of the 2018 SIAM International Conference on Data Mining, SIAM,
  2018, pp.~225--233.

\bibitem{tan2017indexing}
{\sc C.~W. Tan, G.~I. Webb, and F.~Petitjean}, {\em Indexing and classifying
  gigabytes of time series under time warping}, in Proceedings of the 2017 SIAM
  International Conference on Data Mining, SIAM, 2017, pp.~282--290.

\bibitem{xi2006fast}
{\sc X.~Xi, E.~Keogh, C.~Shelton, L.~Wei, and C.~Ratanamahatana}, {\em Fast
  time series classification using numerosity reduction}, in ICML, 2006,
  pp.~1033--1040.

\bibitem{yi1998efficient}
{\sc B.-K. Yi, H.~Jagadish, and C.~Faloutsos}, {\em Efficient retrieval of
  similar time sequences under time warping}, in Data Engineering, 1998.
  Proceedings., 14th International Conference on, IEEE, 1998, pp.~201--208.

\end{thebibliography}
}

\end{document}